\newcommand{\bo}[1]{\textcolor{blue}{Bo: #1}}
\newcommand{\jiawei}[1]{\textcolor{purple}{Jiawei: #1}}
\definecolor{tabgray}{gray}{0.90}
\theoremstyle{plain}
\definecolor{darkgreen}{rgb}{0,0.5,0}
\definecolor{darkblue}{rgb}{0,0,0.5}
\definecolor{purple}{rgb}{1,0,1}
\newtheorem*{claim*}{Claim}
\theoremstyle{definition}
\newtheorem*{remark*}{Remark}
\useunder{\uline}{\ul}{}
\newcommand{\abs}[1]{\left\vert#1\right\vert}
\newcommand{\norm}[1]{\left\Vert#1\right\Vert}
\newcommand{\sr}{sensing-reasoning pipeline\xspace}
\renewcommand{\Pr}[2][]{ \ifthenelse{\isempty{#1}}
  {\mathbf{Pr}\left[#2\right]} {\mathbf{Pr}_{#1}\left[#2\right]} } 
\def\reduceT{\le_\texttt{t}}
\title{Improving Certified Robustness
via Statistical Learning with Logical Reasoning}
\author{%
  Zhuolin Yang\thanks{The first two authors contribute equally to this work.} \\
  UIUC\\
  \texttt{zhuolin5@illinois.edu} \\
  \And
  Zhikuan Zhao$^*$ \\
  ETH Zürich\\
  \texttt{zhikuan.zhao@inf.ethz.ch} \\
  \And
  Boxin Wang \\
  UIUC\\
  \texttt{boxinw2@illinois.edu} \\
  \And
  Jiawei Zhang \\
  UIUC\\
  \texttt{jiaweiz7@illinois.edu} \\
  \And
  Linyi Li \\
  UIUC\\
  \texttt{linyi2@illinois.edu} \\
  \And
  Hengzhi Pei \\
  UIUC\\
  \texttt{hpei4@illinois.edu} \\
  \And
  Bojan Karlaš \\
  ETH Zürich \\
  \texttt{karlasb@inf.ethz.ch} \\
  \And
  Ji Liu \\
  Kwai Inc. \\
  \texttt{ji.liu.uwisc@gmail.com} \\
  \And
  Heng Guo \\
  University of Edinburgh \\
  \texttt{hguo@inf.ed.ac.uk} \\
  \And
  Ce Zhang \\
  ETH Zürich \\
  \texttt{ce.zhang@inf.ethz.ch} \\
  \And
  Bo Li \\
  UIUC \\
  \texttt{lbo@illinois.edu} \\
}
\begin{document}

\maketitle

\begin{abstract}



Intensive algorithmic efforts have been made to enable the rapid improvements of certificated
robustness for complex ML models recently. However,
current robustness certification methods are only able to certify under a limited perturbation radius. 
Given that existing \textit{pure data-driven} statistical approaches have reached a bottleneck, in this paper, we propose to integrate statistical ML models with {knowledge} (expressed as logical rules) as a \textit{reasoning} component using Markov logic networks (MLN), so as to further improve the overall certified robustness.
This opens new research questions
about certifying the robustness of such a paradigm, especially the reasoning component (e.g., MLN).
As the first step towards understanding
these questions, we first prove that the computational complexity of certifying the robustness of MLN is \textsf{\#P}-hard. 
Guided by this hardness result, we then derive the first certified robustness bound for MLN by carefully analyzing different model regimes. 
Finally, we conduct extensive experiments on five datasets including both high-dimensional images and natural language
texts, and we show that the certified robustness with knowledge-based logical reasoning indeed
significantly
outperforms that of the state-of-the-arts.
\end{abstract}

\vspace{-3mm}
\section{Introduction}
\vspace{-2mm}

Given extensive studies on adversarial attacks against ML models recently~\cite{carlini2017towards,eykholt2018robust,qiu2020semanticadv,li2020qeba,zhang2021progressive,li2020nolinear,xiao2018characterizing}, building models that are robust against such 
attacks is an important and emerging topic. 
Thus, a plethora of \textit{empirical defenses} have been proposed to improve the ML robustness~\cite{ma2018characterizing,yangli2021trs,li2014feature,shafahi2019adversarial,xiao2018characterizing,xiao2019advit}; however, most of these are attacked again by  stronger adaptive attacks~\cite{carlini2017towards,athalye2018obfuscated,tramer2020adaptive}.
To end such repeated security cat-and-mouse games, there is a line of research focusing on developing \textit{certified defenses} for DNNs under certain adversarial constraints~\cite{cohen2019certified,li2021tss,li2020sok,xu2020automatic,li2019robustra,YangImp2022,li2022dsrs,yang2021trs,yang2022on}.


Though promising, existing \textit{certified defenses} are restricted to certifying the model robustness within a limited $\ell_p$ norm bounded perturbation radius~\cite{yang2020randomized,cohen2019certified}.
One potential reason for such limitations for existing robust learning approaches is inherent in the fact that most of them have been treating machine learning as a ``pure data-driven" technique that solely depends on a given training set, without interacting with the rich exogenous information such as domain knowledge (e.g., \textit{a stop sign should be of the octagon shape}); while we know human, who has knowledge and inference abilities, is resilient to such attacks.
Indeed, a recent 
seminal work~\cite{gurel2021knowledge} illustrates 
that integrating knowledge rules
can significantly improve 
the \textit{empirical} robustness
of ML models, while leaving
the \textit{certified robustness} 
completely unexplored.


In this paper, we follow this promising 
\texttt{Learning+Reasoning} paradigm
 \cite{gurel2021knowledge} and conduct, to our best knowledge, the first 
study on certified robustness
for it.
Actually, such a \texttt{Learning+Reasoning} paradigm has enabled a diverse range of 
applications~\cite{poon2007joint, 10.1145/3060586, biba2011protein,10.1371/journal.pone.0113523,10.1093/bioinformatics/btv476,xu2019joint,gurel2021knowledge,santurkar2021editing}
including the ECCV'14 best paper~\cite{deng2014large} that encodes 
label relationships as a probabilistic graphical model and improves 
the \textit{empirical} performance of
 deep neural networks
on ImageNet.
In this work, we first provide a concrete \textit{Sensing-reasoning
pipeline} following such paradigm to integrate statistical learning with logical reasoning as illustrated in  Figure~\ref{fig:e2e}. In particular,
the {\em \underline{Sensing Component}} contains
a set of statistical ML models such as deep neural networks (DNNs) that output 
their predictions as a set of Boolean
random variables; and
the {\em \underline{Reasoning Component}}
takes this set of
Boolean random variables as inputs for  
logical inference models such as Markov
logic networks (MLN)~\cite{richardson2006markov} or Bayesian networks (BN)~\cite{pearl2011bayesian} to produce the final output.
We then prove the hardness of certifying the robustness of such a pipeline with MLN for reasoning. Finally, we provide an algorithm to certify the robustness of \sr and we evaluate it on five datasets including both image and text data.

\begin{wrapfigure}{r}{0.5\textwidth}
\vspace{-6mm}
\includegraphics[width=1\linewidth]{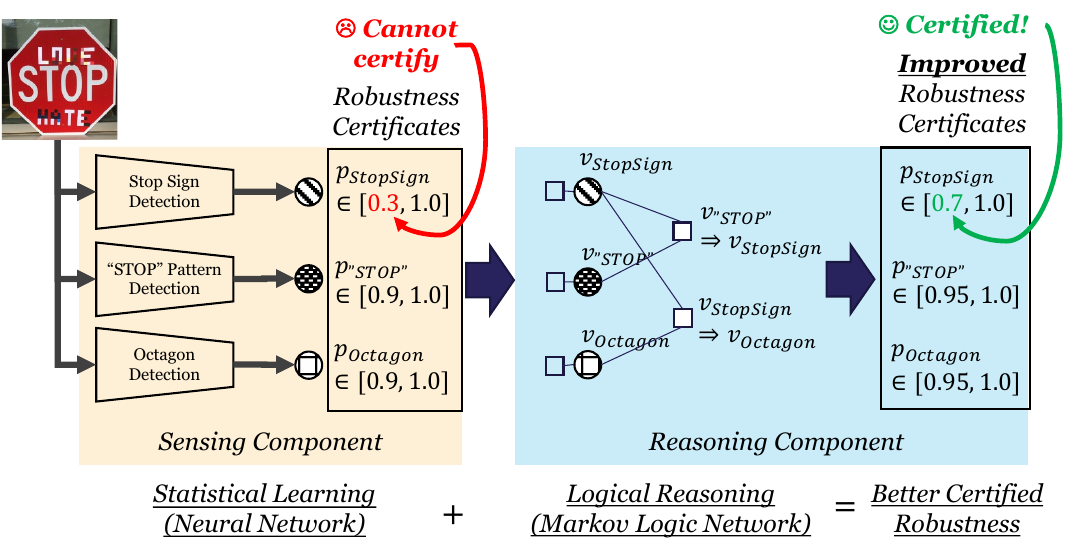}
\vspace{-8mm}
\caption{\small The sensing-reasoning pipeline, 
i.e., a \textit{sensing component} consists of DNNs and a \textit{reasoning component} is constructed 
as MLN. The goal of this paper is to provide certified robustness for such a pipeline, especially the reasoning component.
}
\label{fig:e2e}
\vspace{-4mm}
\end{wrapfigure}


However, certifying the robustness of \sr is challenging,
especially given the inference complexity of the reasoning component.
Our goal is 
to take the first 
step in tackling this challenge.
In particular, the robustness certification of \sr can be expressed as the confidence interval of the 
marginal probability for the final output
of \textit{reasoning component}.
That is to say, we can use existing state-of-the-art
methods to certify the robustness
of the sensing component that 
contains DNNs or ensembles~\cite{cohen2019certified,salman2019provably,yang2021certified}.
Thus, to provide the end-to-end certification 
for the whole pipeline, 
what is left is to understand
\textit{how to certify the reasoning component},
which is the focus of this work.

Compared with
previous efforts focusing on certified robustness
of neural networks, the reasoning 
component brings its own challenges and
opportunities. Different from a neural network 
whose inference can be executed in polynomial time,
many reasoning models such as MLN can be \textsf{\#P}-complete
for inference. 
However,
as many reasoning models define a probability distribution
in the exponential family, we have more functional structures 
that could potentially make the robustness
optimization (which essentially solves a min-max problem)
easier. {\em In this paper, we provide the first treatment
to this problem characterized by these unique
challenges and opportunities.}

We focus on MLN as the \textit{reasoning component}, and explored three
technical questions, each of which corresponds to a technical contribution of this work.

{\em 1. Is certifying robustness for 
the reasoning component feasible when the inference
of the reasoning component is \textsf{\#P}-hard?}
(Section~\ref{sec:hardness})
Before any concrete algorithm can be proposed, it is important to understand the computational complexity of the
robustness certification.
We first prove that the famous problem of counting in
statistical inference~\cite{VALIANT1979189}
can be reduced to the problem of checking the certified robustness of general reasoning components and MLN. Therefore,
checking certified robustness is no easier than 
counting on the same
family of distribution.
In other words, when the reasoning component is  
a graphical model such as MLN,
checking certified robustness is no easier than
calculating the partition function of the 
underlying graphical model, which is \textsf{\#P}-hard.

{\em 2. Can we efficiently reason about the
certified robustness for the reasoning component 
when given an oracle for statistical inference?}
(Section~\ref{sec:alg})
Given the above hardness result, we focus on certifying 
the robustness given an inference oracle.
However, even when statistical inference can be done by a given oracle~\cite{kuzelka2020complex,huynh2009max}, it is still challenging to certify the robustness of MLN. Our second technical contribution
is to develop such an algorithm for 
MLN as the reasoning component.
We prove that providing certified robustness for MLN is possible because of the structure inherent in 
the probabilistic graphical models and distributions
in the exponential family, which could lead to  monotonicity and convexity properties under certain conditions for solving the certification optimization.

{\em 3. Can a reasoning component improve the 
certified robustness 
compared with the state-of-the-art 
certification methods?}
(Section~\ref{sec:exp})
We test our algorithms
on multiple sensing-reasoning pipelines, in which 
the sensing components
contain the state-of-the-art {\em deep neural networks}.
We construct these pipelines to cover a
range of applications including 
image classification and 
natural language processing tasks.
We show that based on our certification method on the reasoning component, the 
knowledge-enriched sensing-reasoning pipelines
achieves significantly higher certified robustness
than the state-of-the-art 
certification methods for DNNs.

The rest of the paper is organized as follows. We will first introduce the design of the sensing-reasoning pipeline in Section~\ref{sec:sensing-reasoning}, followed by concrete 
illustrations 
taking the Markov Logic Networks as an example of the reasoning component in Section~\ref{sec:mln_pipeline}. Next, to certify the robustness of the sensing-reasoning pipeline, especially for the reasoning component, we first prove that certifying the robustness of the reasoning component itself is \textsf{\#P}-complete (Section~\ref{sec:hardness}), and therefore we propose a certification algorithm to upper/lower bound the certification in Section~\ref{sec:certification}, We provide the evaluation of our robustness certification considering different tasks in Section~\ref{sec:exp}.

\vspace{-2mm}
\section{Robust Statistical Learning with Logical Reasoning}
\label{s:method}
\vspace{-0.4em}

\vspace{-2mm}
In this section, we first provide  
a sensing-reasoning pipeline
and then 
formally defined its certified robustness,
and particularly links it to certifying the robustness
for the reasoning component. 

\begin{wrapfigure}{r}{0.5\textwidth}
\centering
\vspace{-2em}
\includegraphics[width=1\linewidth]{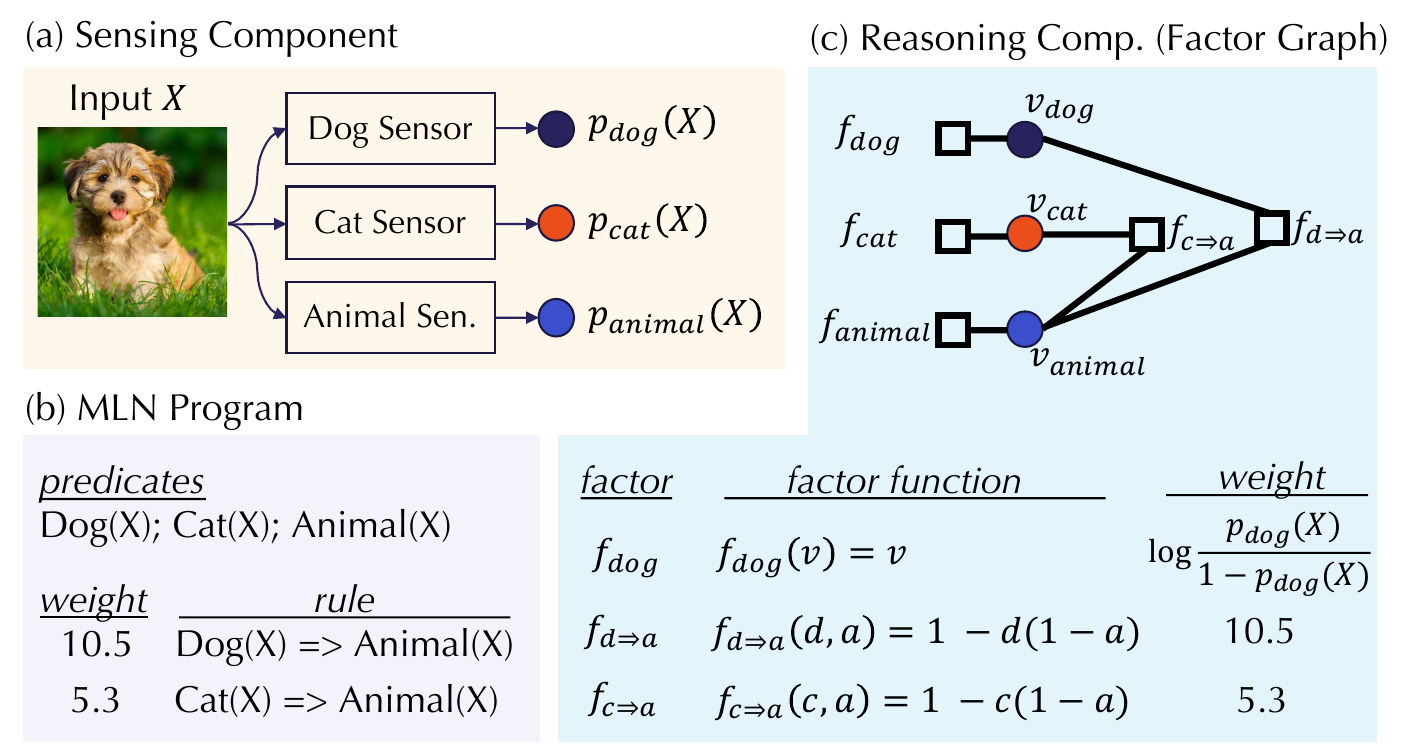}
\vspace{-2em}
\caption{\small A \sr with MLN as the reasoning component.
}
\vspace{-1.5em}
\label{fig:mln_pipeline}
\end{wrapfigure}

\vspace{-2mm}
\subsection{Sensing-Reasoning Pipeline}
\label{sec:sensing-reasoning}

\vspace{-2mm}
A \sr contains a set of $n$ sensors
$\{S_i\}_{i \in [n]}$ and a reasoning component $R$.
Each sensor is a binary classifier (for multi-class classifier it corresponds to a group of 
sensors) --- given an input data 
example $X$, each of the sensor $S_i$ outputs 
a probability $p_i(X)$ (i.e., if $S_i$ is 
a neural network, $p_i(X)$ represents 
its output after the final softmax layer).
The reasoning component takes the outputs of
all sensing models as its inputs, and outputs a new Boolean random variable $R(\{p_i(X)\}_{i \in [n]})$.

\vspace{-1mm}
One natural choice of the reasoning component is 
to use a probabilistic graphical model (PGM).
In the following subsection, we will make the reasoning 
component $R$ more concrete by instantiating it as
a Markov logic network (MLN).
The output of a \sr on 
the input data example $X$ is the expectation of the output of reasoning component $R$: $\mathbb{E}[R(\{p_i(X)\}_{i \in [n]})]$.

\vspace{-1mm}
{\bf Example.} A \sr provides a generic, principled way
of integrating domain knowledge with the 
output of statistical predictive models such as neural networks.
One such example is \cite{deng2014large} the task of 
ImageNet classification. Here each sensing model
corresponds to the classifier for one specific 
class in ImageNet, e.g., $S_{dog}(X)$ and $S_{animal}(X)$.
The reasoning component then encodes domain knowledge 
such that ``{\em If an image is classified as a dog
then it must also be classified as an animal}''
using a PGM. There is no prior work considering the certified robustness of such a knowledge-enabled ML pipeline. 
Figure~\ref{fig:mln_pipeline} illustrates
a concrete \sr,
in which the reasoning component is
implemented as an MLN.

\vspace{-2mm}
\subsection{Reasoning Component as Markov Logic Networks}
\label{sec:mln_pipeline}

\vspace{-2mm}
Given the generic definition of a \sr, one can use different models to implement the reasoning components. In this paper,
we focus on Markov logic
networks (MLN), which is a popular way to define a probabilistic graphical model using first-order logic~\cite{RichardsonMarkov}. 
Concretely, we define the reasoning component
implemented as an MLN, which contains a set of 
weighted first-order logic rules, as illustrated
in Figure~\ref{fig:mln_pipeline}(b). After grounding,
an MLN defines a joint probabilistic 
distribution among a collection of random variables,
as illustrated in Figure~\ref{fig:mln_pipeline}(c).
We adapt the standard MLN semantics to 
a \sr and use a slightly
more general variant compared with the original MLN~\cite{RichardsonMarkov}. 
Each MLN program corresponds to
a factor graph ---
Due to the
space limitation, we will not discuss the
grounding part and point the readers to~\cite{RichardsonMarkov}. We focus on 
defining the result after grounding, i.e., the factor graph. 

Specifically, a grounded MLN is a factor graph
$\mathcal{G} = (\mathcal{V}, \mathcal{F})$, where
$\mathcal{V}$ is a set of Boolean 
random variables. Specific to a \sr, there are two types of random variables
$\mathcal{V} = \mathcal{X} \cup \mathcal{Y}$:
\begin{enumerate}
\vspace{-1mm}
\item {\bf Interface Variables $\mathcal{X} = \{x_i\}_{i \in [n]}$:} Each sensing model 
$S_i$ corresponds to one interface variable 
$x_i$ in the grounded factor graph;
\item {\bf Interior variables $\mathcal{Y} = \{y_i\}_{i \in [m]}$} are other variables introduced 
by the MLN model.
\vspace{-1mm}
\end{enumerate}

Each factor $F \in \mathcal{F}$ contains a
weight $w_{F}$ and a factor function $f_{F}$ defined
over a subset of variables $\bar{\bf v}_{F} \subseteq \mathcal{V}$
that {\em returns $\{0, 1\}$}. There are two sets of factors
$\mathcal{F} = \mathcal{G} \cup \mathcal{H}$:
\begin{enumerate}
\vspace{-1mm}
\item {\bf Interface Factors $\mathcal{G}$:} For each interface variable $x_i$, we create one 
interface factor $G_i$ with weight 
$w_{G_i} = \log [p_i(X) / (1 - p_i(X))]$
and factor function $f_{G_i}(a) = \mathcal{I}[a = 1]$
defined over $\bar{\bf v}_{f_{G_i}} = \{x_i\}$.
\item {\bf Interior Factors $\mathcal{H}$} 
are other factors introduced by the MLN program.
\vspace{-1mm}
\end{enumerate}

\vspace{-1mm}
{\em Remarks: MLN-specific Structure.} Our result applies
to a more general family of factor graphs and are not
necessarily specific to those grounded by MLN. Moreover,
MLN provides an intuitive way of grounding such a 
factor graph with domain knowledge, and factor 
graphs grounded by MLN have certain properties 
that we will use later, e.g., all factors only return 
non-negative values, and there are no unusual 
weight sharing structures.

The above factor graph defines a joint probability
distribution among all variables $\mathcal{V}$. 
We define a {\em possible world} as
a function $\sigma: \mathcal{V} \mapsto \{0, 1\}$ 
that corresponds to one possible assignment of 
values to each random variable.
Let $\Sigma$ denote the set of all 
(exponentially many) possible worlds. 


The {\em statistical 
inference} process of a reasoning component implemented using MLNs~\cite{RichardsonMarkov} computes the marginal probability
of a given variable $v \in \mathcal{V}$:
\begin{align*}
\vspace{-2mm}
\mathbb{E}[ R_{MLN}(\{p_i(X)\}_{i \in [n]})] =
\Pr{v = 1} 
= Z_1(\{p_i(X)\}_{i \in [n]}) / Z_2(\{p_i(X)\}_{i \in [n]})
\end{align*}
where the partition functions $Z_1$ and $Z_2$
are defined as
\begin{small}
\vspace{-2mm}
\begin{align*}
&Z_1(\{p_i(X)\}_{i \in [n]}) 
= \sum_{\sigma \in \Sigma \wedge \sigma(v) = 1}\exp\left\{ 
\sum_{G_i \in \mathcal{G}} w_{G_i} \sigma(x_i) +
\sum_{H \in \mathcal{H}} w_{H} f_{H}(\sigma(\bar{\bf v}_H))
\right\} \\
&Z_2(\{p_i(X)\}_{i \in [n]})  
= \sum_{\sigma \in \Sigma}\exp\left\{ 
\sum_{G_i \in \mathcal{G}} w_{G_i} \sigma(x_i) +
\sum_{H \in \mathcal{H}} w_{H} f_{H}(\sigma(\bar{\bf v}_H))
\right\}
\vspace{-2mm}
\end{align*}
\end{small}

\vspace{-1mm}
{\bf \underline{Why $w_{G_i} = \log [p_i(X) / (1 - p_i(X))]$?}} When the MLN does not introduce
any interior variables and interior factors, it is easy to see that setting $w_{G_i} = \log [p_i(X) / (1 - p_i(X))]$ ensures that the marginal probability of 
each interface variable equals to the output
of the original sensing model $p_i(X)$. This 
means that if we do not have additional knowledge
in the reasoning component, the pipeline
outputs the \textit{same} distribution as the original
sensing component.

{\bf \underline{Learning Weights
for Interior Factors?}} In this paper, we view all weights for interior factors
as hyperparameters. These weights can 
be learned by maximizing the likelihood 
with weight learning algorithms for MLNs~\cite{10.1007/978-3-540-74976-9_21}.

{\bf \underline{Beyond Marginal Probability for a Single Variable.}} 
We have assumed that the output of a \sr
is the marginal probability distribution of a given random variable
in the grounded factor graph. However, our result can be 
more general --- given a function over possible worlds and outputs
$\{0, 1\}$, the output of a pipeline can be
the marginal probability of such a function. This will not change
the algorithm that we propose later.

\vspace{-0.9em}
\section{Hardness of Certifying Reasoning Robustness}
\vspace{-0.4em}
\label{sec:hardness}

\vspace{-2mm}
{\em Given a reasoning component $R$, how hard is it
to reason about its robustness?} In this section,
we aim at understanding this fundamental question. In order to provide the certified robustness of the reasoning component, which is defined as the lower bound of model predictions for inputs considering an adversarial perturbation with bounded magnitude~\cite{cohen2019certified}, we need to analyze the hardness of this certification problem first.
Specifically, we present the hardness results of 
determining the robustness of the reasoning component defined above, before we can provide our certification algorithm in \Cref{sec:alg}.
We start by defining the counting~\cite{VALIANT1979189} and robustness 
problems on general distribution. We prove that 
counting can be reduced to checking for reasoning robustness, and hence the latter is at least as hard;
We then prove the complexities 
of reasoning with MLN.

\vspace{-2mm}
\subsection{Harness of Certifying General Reasoning Model}

\vspace{-2mm}
Let $\*X=\{x_1,x_2,\dots,x_n\}$ be a set of variables.
Let $\pi_{\alpha}$ be a distribution over $D^{[n]}$ defined by a set of parameters $\alpha\in P^{[m]}$,
where $D$ is the domain of variables, either discrete or continuous,
and $P$ is the domain of parameters.
We call $\pi$ \emph{accessible} if for any $\sigma\in D^{[n]}$, $\pi_{\alpha}(\sigma)\propto w(\sigma;\alpha)$,
where $w:D^{[n]}\times P^{[m]}\rightarrow  \mathbb R_{\ge 0}  $ is a polynomial-time computable function. 
We will restrict our attention to accessible distributions only.
We use $Q:D^{[n]}\rightarrow\{0,1\}$ to denote a Boolean query, which is a polynomial-time computable function. We define the following two oracles:

\def\Counting{\textnormal{\textsc{Counting}}}

\begin{definition}[\textsc{Counting}]
Given input polynomial-time computable weight function $w(\cdot)$ and query function $Q(\cdot)$, parameters $\alpha$, a real number $\eps>0$, a \textsc{Counting} oracle outputs a real number $Z$ that 
\vspace{-1mm}
\begin{small}
\[
1-\eps\le \frac{Z}{\E[\sigma\sim\pi_{\alpha}]{Q(\sigma)}}\le 1+\eps.
\]
\end{small}
\end{definition}
\vspace{-1mm}

\def\Robustness{\textnormal{\textsc{Robustness}}}

\begin{definition}[\textsc{Robustness}]
Given input polynomial-time computable weight function $w(\cdot)$ and query function $Q(\cdot)$, parameters $\alpha$, two real numbers $\epsilon >0$ and $\delta>0$, a \textsc{Robustness} oracle decides, for any $\alpha'\in P^{[m]}$ such that $\norm{\alpha-\alpha'}_\infty \le \epsilon$, whether the following is true:
\begin{align*}
\begin{small}
  \abs{\E[\sigma\sim\pi_{\alpha}]{Q(\sigma)}-\E[\sigma\sim\pi_{\alpha'}]{Q(\sigma)}}<\delta.
  \end{small}
\end{align*}
\end{definition}
\vspace{-2mm}

\noindent We can prove that \textsc{Robustness} is at least as hard as
\textsc{Counting} by a reduction argument.

\begin{theorem}[$\Counting\reduceT\Robustness$]\label{hardness_from_reduction}
Given polynomial-time computable weight function $w(\cdot)$ and query function $Q(\cdot)$, parameters $\alpha$ and real number $\eps>0$, the instance of \textsc{Counting}, $(w,Q,\alpha,\eps)$ can be determined by up to $O(1/\varepsilon_c^2)$ queries of the \textsc{Robustness} oracle with input perturbation $\epsilon=O(\varepsilon_c)$.
\label{theorem:counting-to-robustness}
\end{theorem}

\vspace{-2mm}
\textit{Proof-sketch.} We define the partition function $Z_i:=\sum_{\sigma:Q(\sigma)=i} w(\sigma;\alpha)$ and $\E[\sigma\sim\pi_{\alpha}]{Q(\sigma)} = Z_1 / (Z_0 + Z_1)$.
 We then construct a new weight function $t(\sigma;\alpha):=w(\sigma; \alpha)\exp(\beta Q(\sigma))$ by introducing an additional parameter $\beta$, such that $\tau_{\beta}(\sigma) \propto t(\sigma;\beta)$, and $\E[\sigma\sim\tau_{\beta}]{Q(\sigma)} = \frac{e^\beta Z_1}{Z_0+e^\beta Z_1}$. Then we consider the perturbation $\beta'=\beta\pm \epsilon$, with $\epsilon=O(\varepsilon_c)$  and query the \textsc{Robustness} oracle with input $(t,Q,\beta,\epsilon ,\delta)$ multiple times to perform a binary search in $\delta$ to estimate $\abs{\E[\sigma\sim\pi_{\beta}]{Q(\sigma)}-\E[\sigma\sim\pi_{\beta'}]{Q(\sigma)}}$. Perform a further ``outer" binary search to find the $\beta$ which maximizes the perturbation. This yields a good estimator for $\log{\frac{Z_0}{Z_1}}$ which in turn gives $\E[\sigma\sim\pi_{\alpha}]{Q(\sigma)}$ with $\varepsilon_c$ multiplicative error. 
We leave detailed proof to~\Cref{apx:hardness}.



\vspace{-2mm}
\subsection{Hardness of Certifying Markov Logic Networks}
\label{sec:hard-mln}

\vspace{-2mm}
Given Theorem \ref{theorem:counting-to-robustness}, we can now
state the following result specifically for MLNs:

\begin{theorem}[MLN Hardness]
Given an MLN whose grounded factor graph is $\mathcal{G} = (\mathcal{V}, \mathcal{F})$  in which the weights for interface factors are $w_{G_i} = \log p_i(X)/(1-p_i(X))$ and constant thresholds $\delta, \{C_i\}_{i\in[n]}$, deciding whether
\vspace{-4mm}

\begin{small}
\begin{align*}
&\forall \{\epsilon_i\}_{i\in [n]}~~~~
   (\forall i.~|\epsilon_i| < C_i) \implies 
\left|\mathbb{E} R_{MLN}(\{p_i(X)\}_{i \in [n]})
     - \mathbb{E} R_{MLN}(\{p_i(X) + \epsilon_i\}_{i \in [n]})\right| < \delta
\end{align*}
\end{small}
is as hard as estimating 
$\mathbb{E} R_{MLN}(\{p_i(X)\}_{i \in [n]})$ up to $\varepsilon_c$ multiplicative error, with $\epsilon_i = O(\varepsilon_c)$.

\end{theorem}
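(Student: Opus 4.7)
The plan is to specialize Theorem~\ref{theorem:counting-to-robustness} to the MLN setting by exhibiting a grounded MLN as an instance of an accessible distribution and matching its perturbation model to that of the general framework. The unnormalized weight of a possible world $\sigma$ in a grounded MLN, namely $w(\sigma;\{p_i\}) = \exp\bigl(\sum_{G_i\in\mathcal{G}} w_{G_i}\,\sigma(x_i) + \sum_{H\in\mathcal{H}} w_H\, f_H(\sigma(\bar{\bf v}_H))\bigr)$ with $w_{G_i} = \log(p_i(X)/(1-p_i(X)))$, is polynomial-time computable from $\sigma$ and the vector $\alpha := \{p_i(X)\}_{i\in[n]}$. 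Hence the marginal $\E{R_{MLN}(\{p_i(X)\}_{i\in[n]})} = \Pr{v=1} = \E[\sigma\sim\pi_\alpha]{Q(\sigma)}$ with Boolean query $Q(\sigma) := \sigma(v)$ fits the accessible-distribution template, and the parameter vector $\alpha\in (0,1)^n$ sits inside the general parameter space with the $\ell_\infty$ norm aligned to the coordinate-wise bound $|\epsilon_i|\le C$ in the theorem statement.

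With this identification, the second step is to invoke the $\Counting\reduceT\Robustness$ reduction directly. Given an MLN instance $(\mathcal{G},\mathcal{F},v)$ with sensing outputs $\{p_i(X)\}$ for which we want a $(1\pm\varepsilon_c)$-multiplicative estimate of $\E{R_{MLN}}$, the reduction issues $O(1/\varepsilon_c^2)$ queries to a $\Robustness$ oracle with perturbation magnitude $\varepsilon=O(\varepsilon_c)$. After instantiation, each such oracle query is precisely a decision of the form stated in the theorem: whether every perturbation $\{\epsilon_i\}_{i\in[n]}$ with $|\epsilon_i|\le C$ keeps the marginal of $v$ within $\delta$ of its unperturbed value. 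Hence an algorithm that decides the MLN-robustness statement yields an algorithm that estimates $\E{R_{MLN}}$ to multiplicative error $\varepsilon_c$ with $\epsilon_i = O(\varepsilon_c)$ as claimed.

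The main obstacle is that the generic reduction introduces an auxiliary parameter $\beta$ and tilts the weight to $t(\sigma;\alpha) := w(\sigma;\alpha)\exp(\beta\, Q(\sigma))$, which does not live literally in the MLN's native parameter space $\{p_i(X)\}$. I would handle this in two cases. If $v = x_j$ is an interface variable, then shifting $\beta$ by $\Delta$ is equivalent to replacing $w_{G_j}$ by $w_{G_j}+\Delta$, i.e.\ replacing $p_j(X)$ by $p_j'$ satisfying $\log(p_j'/(1-p_j')) = w_{G_j}+\Delta$. Because $dp/dw = p(1-p)\le 1/4$, a weight shift of magnitude $O(\varepsilon_c)$ translates to a probability shift of magnitude $O(\varepsilon_c)$, so instantiating the hardness instance with $p_j$ bounded away from $0$ and $1$ (e.g.\ $p_j=1/2$) preserves $\epsilon_i=O(\varepsilon_c)$ throughout the outer binary search over $\beta$. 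If $v$ is an interior variable, I would first reduce to the interface case by augmenting the MLN with one additional sensing model whose interface variable is a copy of $v$ enforced by a large-weight equality factor; the augmented pipeline still fits the MLN template and the perturbation again acts on a single $p_i$.

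Once the reduction is in place, the conclusion is immediate: deciding MLN robustness is at least as hard as producing a $(1\pm\varepsilon_c)$-multiplicative estimate of $\E{R_{MLN}}$, inheriting the quantitative matching $\epsilon_i=O(\varepsilon_c)$ from Theorem~\ref{theorem:counting-to-robustness}. The most delicate part of the plan is engineering the hardness instance (choice of the $p_j$ values and insertion of any auxiliary factors) so that the $\beta$-to-$p_j$ translation remains valid with absolute constants across all the oracle queries used by the inner/outer binary searches in $\delta$ and $\beta$.
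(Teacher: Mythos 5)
Your proposal takes essentially the same route as the paper: cast the grounded MLN as an accessible distribution with parameters $\alpha=\{p_i(X)\}$ and query $Q(\sigma)=\sigma(v)$, then invoke the $\textsc{Counting}\le_{\texttt{t}}\textsc{Robustness}$ reduction (Theorem~\ref{theorem:counting-to-robustness}) directly. The paper's proof is a one-liner that leaves implicit the translation between the auxiliary tilting parameter $\beta$ of the generic reduction and the MLN's native parameters $p_i(X)$; your explicit handling of this (absorbing $\beta$ into the interface weight $w_{G_j}$, converting log-odds shifts to probability shifts via $dp/dw=p(1-p)$, and adding an interface copy when $v$ is interior) is a correct and more careful elaboration of the same argument rather than a different one.
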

\vspace{-5mm}
\begin{proof}
	Let $\alpha=[p_i(X)]$, query function $Q(.)= R_{MLN}(.)$ and $\pi_\alpha$ defined by the marginal distribution over interior variables of MLN. Theorem \ref{theorem:counting-to-robustness} directly implies that $O(1/\varepsilon_c^2)$ queries of a \textsc{Robustness} oracle can be used to efficiently estimate $\mathbb{E} R_{MLN}(\{p_i(X)\}_{i \in [n]})$. 
\end{proof}
	
\vspace{-3mm}
\noindent In general, statistical inference in
MLNs is \textsf{\#P}-complete, and checking
robustness for general MLNs is also 
\textsf{\#P}-hard.

\vspace{-1em}
\section{Certifying the Robustness of Sensing-Reasoning Pipeline}
\vspace{-0.5em}
\label{sec:certification}

Given a \sr with $n$ sensors $\{S_i\}_{i \in [n]}$ and a reasoning component $R$, we will first formally define its end-to-end certified robustness and then its connection to the robustness of each component. In particular, based on the above hardness result for \textit{certifying the robustness of the reasoning component} in Section~\ref{sec:hardness}, we will provide an effective certification method to upper/lower bound the certification, taking \textit{any} oracle for the inference of the reasoning component into account. With the certification of the reasoning component, we will finally provide the robustness certification for the sensing-reasoning pipeline by combining the certification of sensing and reasoning components.

\vspace{-1mm}
\begin{definition}[$(C_I, C_E, p)$-robustness]
A \sr with $n$ sensors $\{S_i\}_{i \in [n]}$
and a reasoning component $R$ 
is $(C_I, C_E, p)$-robust on the input $X$,
if for input perturbation $\eta, ||\eta||_p \leq C_I$ 
\vspace{-4mm}
\begin{align*}
\small
\left|\mathbb{E}[R(\{p_i(X)\}_{i \in [n]})] - \mathbb{E}[R(\{p_i(X + \eta)\}_{i \in [n]})] \right| \leq C_E.
\end{align*}
\end{definition}

\vspace{-2mm}
\noindent I.e., a perturbation 
$||\eta||_p < C_I$ on the input only changes
the final pipeline output by at most $C_E$.

\vspace{-1mm}
{\bf Sensing Robustness and Reasoning Robustness.} We
decompose the end-to-end certified robustness of the pipeline into two components.
The first component, which we call the \textit{sensing robustness}, has been 
studied by the research community recently~\cite{kolter2017provable,tjeng2017evaluating,cohen2019certified} ---
given a perturbation $||\eta||_p < C_I$ on the input $X$,
we say each sensor $S_i$ is $(C_I, C_S^{(i)}, p)$-robust 
if
\vspace{-1.8mm}
\begin{align*}
\small
\forall \eta, ||\eta||_p \leq C_I \implies 
|p_i(X) - p_i(X+\eta)| \leq C_S^{(i)}
\end{align*}
\vspace{-1mm}
The robustness of the \textit{reasoning component} R is defined as: Given a perturbation $|\epsilon_i| < C_S^{(i)}$ 
on the output of each sensor $S_i(X)$, we say 
the reasoning component $R$ is $\left(\{C_S^{(i)}\}_{i\in[n]}, C_E\right)$-robust 
if 
\begin{align*}
\vspace{-3mm}
\forall \epsilon_1, ..., \epsilon_n, 
(\forall i.~|\epsilon_i| \leq C_S^{(i)}) \implies  
\left| \mathbb{E}[R(\{p_i(X)\}_{i \in [n]}) ] 
    - \mathbb{E}[R(\{p_i(X)+\epsilon_i\}_{i \in [n]})] \right| \leq C_E.
\end{align*}
It is easy to see that when the sensing component is
$\left(C_I, \{C_S^{(i)}\}_{i\in[n]}, p\right)$-robust and the reasoning component
is $\left(\{C_S^{(i)}\}_{i\in[n]}, C_E\right)$-robust on $X$, the \sr is
$(C_I, C_E, p)$-robust. Since the sensing robustness has
been intensively studied by previous work, in this paper, 
we mainly focus on the reasoning robustness and therefore analyze the robustness of the pipeline.

\vspace{-2mm}
\subsection{Certifying Sensing Robustness}

\vspace{-2mm}
There are several existing ways to certify the robustness of sensing models, such as Interval Bound Propagation (IBP)~\cite{gowal2018effectiveness}, Randomized Smoothing~\cite{cohen2019certified}, and others~\cite{zhang2018efficient,weng2018towards}. Here we will leverage randomized smoothing to provide an example for certifying the robustness of sensing components.
\begin{corollary}
\label{col:smooth}
Given a sensing model $S_i$,
we construct a \textit{smoothed 
sensing model} 
$g_i(X; \hat{\sigma}) = 
\mathbb{E}_{\xi \sim \mathcal{N}(0, \hat{\sigma}^2)} p_i(X + \xi)$.
With input perturbation $||\eta||_2 \leq C_I$, the smoothed sensing 
model satisfies
\begin{align*}
    \Phi(\Phi^{-1}(g_i(X; \hat{\sigma}))-C_I/\hat{\sigma}) & \leq g_i(X + \eta; \hat{\sigma}) 
    \leq \Phi(\Phi^{-1}(g_i(X; \hat{\sigma}))+C_I/\hat{\sigma})
    \vspace{-2mm}
\end{align*}
where $\Phi$ is the 
Gaussian CDF and $\Phi^{-1}$ as its inverse.

\vspace{-3mm}
\end{corollary}
Thus, the output probability of smoothed sensing model can be bounded given input perturbations.
Note that the specific 
ways of certifying sensing robustness
is orthogonal to certifying 
reasoning robustness, and one can  plug in different sensing certification strategies.

\vspace{-2mm}
\subsection{Certifying Reasoning Robustness}
\label{sec:alg}
\vspace{-1mm}


Given the hardness results for certifying reasoning robustness in \Cref{sec:hard-mln},
in this paper, we assume that we have access
to an oracle for statistical inference,
and provide a novel algorithm
to certify the reasoning robustness.
I.e., we assume that we are able to
calculate the two partition functions 
$Z_1(\{p_i(X)\}_{i \in [n]})$ and 
$Z_2(\{p_i(X)\}_{i \in [n]})$.

\begin{lemma}[MLN Robustness]\label{MLN Robustness}
Given access to partition functions $Z_1(\{p_i(X)\}_{i \in [n]})$ and 
$Z_2(\{p_i(X)\}_{i \in [n]})$, and maximum perturbations $\{C_i\}_{i\in[n]}$, 
$\forall \epsilon_1, ..., \epsilon_n$, if $\forall i.~|\epsilon_i| < C_i$,
we have that $\forall \lambda_1, ..., \lambda_n \in \mathbb{R}$,
\begin{small}
\begin{align*}
&\max_{\{|\epsilon_i| < C_i\}} \ln \mathbb{E}[R_{MLN}(\{p_i(X) + \epsilon_i\}_{i \in [n]})] 
\leq \max_{\{|\epsilon_i|<C_i\}} \widetilde{Z_1}(\{\epsilon_i\}_{i \in [n]}) 
-\min_{\{|\epsilon_i'|<C_i\}}
\widetilde{Z_2}(\{\epsilon_i'\}_{i \in [n]}) 
\end{align*}
\begin{align*}
&\min_{\{|\epsilon_i| < C_i\}} \ln \mathbb{E}[R_{MLN}(\{p_i(X) + \epsilon_i\}_{i \in [n]})]  
\geq \min_{\{|\epsilon_i|<C_i\}}
\widetilde{Z_1}(\{\epsilon_i\}_{i \in [n]}) 
 -
\max_{\{|\epsilon_i'|<C_i\}}
\widetilde{Z_2}(\{\epsilon_i'\}_{i \in [n]}) 
\vspace{-2mm}
\end{align*}
\end{small}
where
\vspace{-2mm}
\begin{small}
\[
\widetilde{Z_r}(\{\epsilon_i\}_{i \in [n]}) = \ln Z_r(\{p_i(X)+\epsilon_i\}_{i \in [n]}) + \sum_i \lambda_i \epsilon_i.
\]
\end{small}
\end{lemma}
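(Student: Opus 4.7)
The plan is to reduce the lemma, after one algebraic rewrite, to the elementary decoupling inequalities $\max(A-B)\le \max A - \min B$ and $\min(A-B)\ge \min A - \max B$. The free multipliers $\lambda_i$ enter only through an additive identity that cancels pointwise, so they play no role in the value being bounded but will provide slack for later applications.

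First I would use the definition of the pipeline output as a ratio of partition functions to write
\[
\ln \mathbb{E}[R_{MLN}(\{p_i(X)+\epsilon_i\}_{i\in[n]})] \;=\; \ln Z_1(\{p_i(X)+\epsilon_i\}) \;-\; \ln Z_2(\{p_i(X)+\epsilon_i\}).
\]
For any $\lambda_1,\dots,\lambda_n\in\mathbb{R}$ I would then add and subtract $\sum_i \lambda_i \epsilon_i$ on the right, which cancels exactly, producing the pointwise identity
\[
\ln \mathbb{E}[R_{MLN}(\{p_i(X)+\epsilon_i\})] \;=\; \widetilde{Z_1}(\{\epsilon_i\}) \;-\; \widetilde{Z_2}(\{\epsilon_i\}),
\]
valid for every $\{\epsilon_i\}$ and every $\{\lambda_i\}$. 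Next I would take $\max$ (respectively $\min$) of both sides over the box $\{|\epsilon_i|<C\}$ and invoke the one-line facts
\[
\max_\epsilon [A(\epsilon)-B(\epsilon)] \le \max_\epsilon A(\epsilon) - \min_{\epsilon'} B(\epsilon'), \qquad \min_\epsilon [A(\epsilon)-B(\epsilon)] \ge \min_\epsilon A(\epsilon) - \max_{\epsilon'} B(\epsilon'),
\]
each of which follows from the chain $A(\epsilon)-B(\epsilon) \le \max A - B(\epsilon) \le \max A - \min B$, applied with $A=\widetilde{Z_1}$ and $B=\widetilde{Z_2}$ over the same $\ell_\infty$-box. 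This produces both stated inequalities.

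The ``hard part'' is conceptual rather than technical: one has to see that introducing $\lambda_i$ costs nothing in the pointwise identity yet buys a full vector of freedom for the two separate optimisations. In a downstream use of the lemma, the $\lambda_i$ can be chosen so that $\widetilde{Z_1}$ and $\widetilde{Z_2}$ become individually convex or concave in $\epsilon$ around $0$ (exploiting that $\ln Z_r$ is a log-partition function, hence a cumulant generating function in the relevant parameters), making the two separate optima over the box tractable. The proof itself contains no nontrivial computation; all the substance lives in the definitions and in the later exploitation of the $\lambda_i$ slack.
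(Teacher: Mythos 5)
Your proposal is correct and is essentially the paper's own argument in a different dress: the paper splits $\epsilon$ into two copies constrained to be equal, relaxes the equality via Lagrange multipliers $\lambda_i$, and then lets the relaxed problem decouple, which is exactly your add-and-subtract of $\sum_i\lambda_i\epsilon_i$ followed by $\max_\epsilon[A(\epsilon)-B(\epsilon)]\le\max_\epsilon A-\min_{\epsilon'}B$. The only (cosmetic) difference is where the single inequality lands — in the Lagrangian relaxation step for the paper, in the decoupling step for you — and your closing remark about choosing $\lambda_i$ to restore monotonicity/convexity of $\widetilde{Z_r}$ matches how the paper exploits the slack in its Algorithm 1.
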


We leave the proof to the~\Cref{apx:robustness_mln}. The high-level proof idea is to decouple $Z_1/Z_2$
into two sub-problems via 
a collection of Lagrangian multipliers, i.e., $\{\lambda_i\}$.
For any assignment of 
$\{\lambda_i\}$,
we obtain a valid upper/lower
bound, which reduces the 
certification process to 
the process
of \textit{searching} for
an assignment 
of these multipliers that 
minimize the upper bound (maximize the
lower bound).

\vspace{-1mm}
To efficiently search 
for the optimal assignment of 
$\{\lambda_i\}$, it is crucial to consider the interactions between
these $\{\lambda_i\}$ and 
the corresponding solution
of $\widetilde{Z_r}$,
which hinges on the structure of MLN. In particular, we can prove the following (Detailed proofs and discussions in \Cref{apx:sup_alg1}):

\begin{proposition}[Monotonicity]
When $\lambda_i \geq 0$,
$\widetilde{Z_r}(\{\epsilon_i\}_{i \in [n]})$
monotonically increases w.r.t. $\epsilon_i$; 
When $\lambda_i \leq -1$, $\widetilde{Z_r}(\{\epsilon_i\}_{i \in [n]})$
monotonically decreases w.r.t. $\epsilon_i$.
\end{proposition}

\begin{proposition}[Convexity]
$\widetilde{Z_r}(\{\tilde{\epsilon}_i\}_{i \in [n]})$ is a convex function in $\tilde{\epsilon}_i, \forall i$ with
\begin{small}
$$\tilde{\epsilon}_i=\log\left[\frac{(1-p_i(X))(p_i(X)+\epsilon_i)}{p_i(X)(1-p_i(X)-\epsilon_i)}\right].$$
\end{small}
\end{proposition}
\textit{Implication.} Given the monotonicity region, the maximal and minimal of $\widetilde{Z_r}$ are achieved at either $\epsilon_i = -C_i$ or $\epsilon_i = C_i$ respectively.
Given the convexity region,  the maximal is achieved at $\epsilon_i \in \{-C_i, C_i\}$, and the minimal is achieved at $\epsilon_i\in\{-C_i, C_i\}$ or at the zero gradient of $\widetilde{Z_r}(\{\tilde{\epsilon}_i\}_{i \in [n]})$.
As a result, our analysis leads to the following certification algorithm.

\begin{wrapfigure}{r}{0.55\textwidth}
\vspace{-2.6em}
\scriptsize
\begin{minipage}{0.55\textwidth}
\begin{algorithm}[H]
\caption{Algorithms for  
MLN robustness upper bound (algorithm of lower bound is
similar)}
\scriptsize
\label{alg:MLNRobustness}
\begin{algorithmic}[1]
\begin{small}
\vspace{-0.2em}
\INPUT: Oracles calculating $\widetilde{Z_1}$
and $\widetilde{Z_2}$; maximal perturbations $\{C_i\}_{i\in [n]}$.
\OUTPUT: An upper bound for input $R_{MLN}(\{p_i(X) + \epsilon_i\})$ \\
\STATE $\overline{R}_{min} \leftarrow 1$ 
\STATE initialize $\lambda$
\FOR{$b \in \text{search budgets}$}  
\STATE $\lambda \rightarrow \texttt{update}(\{\lambda\}; \lambda_i \in (-\infty, -1] \cup [0, +\infty))$
\FOR{$i=1$ {\bfseries to} $n$}
    \IF{$\lambda_i\ge 0$}
    \STATE $\epsilon_i=C_i$, $\epsilon'_i=-C_i$
    \ELSIF{$\lambda_i\le -1$}
    \STATE $\epsilon_i=-C_i$, $\epsilon'_i=C_i$
    \ENDIF
    \STATE $\overline{R} \leftarrow \widetilde{Z_1}(\{\epsilon_i\}_{i \in [n]}) -
    \widetilde{Z_2}(\{\epsilon_i'\}_{i \in [n]})$
    \STATE $\overline{R}_{min} \leftarrow \min(\overline{R}_{min}, \overline{R})$
\ENDFOR
\ENDFOR
\STATE {\bf return} $\overline{R}_{min}$
\end{small}
\end{algorithmic}
\end{algorithm}
\end{minipage}
\end{wrapfigure}

\textbf{Algorithm of Certifying Reasoning Robustness.} Algorithm~\ref{alg:MLNRobustness}
illustrates the detailed algorithm based 
on the above result to upper bound the robustness of MLN. The main step is
to explore different regimes of the 
$\{\lambda_i\}$.
In this paper, we only explore regimes
where $\lambda \in (-\infty, -1] \cup [0, +\infty)$
as this already provides reasonable solutions
in our experiments.
The function
$\texttt{update}(\{\lambda_i\})$
defines the exploration strategy ---
Depending on the scale of the problem, one
can explore $\{\lambda_i\}$ using
grid search, random sampling, or even gradient-based
methods. For experiments in this paper,
we use either grid search or random sampling.
It is an exciting future direction to
understand other efficient exploration and search
strategies.
We leave the detailed explanation
of the algorithm to~\Cref{{apx:sup_alg1}}.

\section{Experiments}
\label{sec:exp}

We conduct intensive experiments on five datasets to evaluate the certified robustness of the sensing-reasoning pipeline. We focus on two tasks
with different modalities: 
\textit{image classification} task on Road Sign dataset created based on GTSRB dataset~\cite{stallkamp2012man}
following the standard setting as~\cite{gurel2021knowledge};
and \textit{information extraction} task with stocks news on text data. 
We also report additional results 
on two other image classification tasks (Word50~\cite{chen2015learning} and PrimateNet, which is a subset of ImageNet ILSVRC2012~\cite{imagenet_cvpr09}) with natural knowledge rules
in~\Cref{adx:word50-section} and~\Cref{adx:primate-section}. We also report results on standard image benchmarks (MNIST and CIFAR10) with manually constructed knowledge rules in~\Cref{adx:constructed-knowledge}. The code is provided at~\url{https://github.com/Sensing-Reasoning/Sensing-Reasoning-Pipeline}.





\subsection{Experimental Setup}

\noindent\textbf{Datasets and Tasks.} For the \textit{road sign classification} task, we follow~\cite{gurel2021knowledge} and use the same 
dataset GTSRB~\cite{stallkamp2012man}, which contains 12 types of German road signs \{"Stop'', "Priority Road'', "Yield'', "Construction Area'', "Keep Right'', "Turn Left'', "Do not Enter'', "No Vihicles'', "Speed Limit 20'', "Speed Limit 50'', "Speed Limit 120'', "End of Previous Limitation''\}.
It consists of 14880 training samples, 972 validation samples, and 3888 testing samples. We also include 13 
additional detectors for knowledge 
integration, detecting attributes such as
whether the border has an octagon shape
(See \Cref{adx:road-sign-section} for a full list).

For the \textit{information extraction} task,  we 
use the HighTech dataset which consists of both daily closing asset price and financial news from \textsl{2006} to \textsl{2013}~\cite{ding2014using}. We choose 9 companies with the most news, resulting in 4810 articles related to 9 stocks filtered by company name. We split the dataset into training and testing days chronologically. 
We define three information extraction tasks as our sensing models: \texttt{StockPrice(Day, Company, Price)}, \texttt{StockPriceChange(Day, Company, Percent)}, \texttt{StockPriceGain(Day, Company)}.
The domain knowledge that we integrate depicts
 the relationships between these relations (See \Cref{apx:stock_news} for more details). 

\noindent\textbf{Knowledge Rules.} We 
integrate different types of knowledge 
rules for these two applications.
We provide the full list of knowledge rules
in the \Cref{adx:road-sign-section}.

For \textit{road sign classification}, we
follow~\cite{gurel2021knowledge}, which includes 
two different types of knowledge rules ---
{\em Indication rules} (road sign class $u$ indicates attribute $v$)
and {\em Exclusion rules} (attribute classes $u$ and $v$ with the same general type such as "Shape'', "Color'', "Digit'' or "Content'' are naturally exclusive).

For \textit{information extraction},
we integrate knowledge about 
the relationships between the sensing models
(e.g., \texttt{StockPrice}, \texttt{StockPriceChange},
\texttt{StockPriceGain}).
For example,
the stock prices
of two consecutive days, 
$\texttt{StockPrice}(d_1, Company, p_1)$ and
$\texttt{StockPrice}(d_2, Company, p_2)$,
should be 
consistent with 
$\texttt{StockPriceChange}(d_2, Company, p)$,
i.e., $p = (p_2 - p_1)/p_1$.


\noindent\textbf{Implementation Details.}
Throughout the road sign classification experiment, we implement all
sensing models using the GTSRB-CNN~\cite{eykholt2018robust} architecture. During training, we train all sensors with Isotropic Gaussian~$\epsilon \sim \mathcal{N}(0, \hat{\sigma}^2I_d)$ augmented data with 50000 training iterations until converge and tune the training parameters on the validation set, following \cite{cohen2019certified}. We use the SGD-momentum with the initial learning rate as $0.01$ and the weight decay parameter as $10^{-4}$ to train all the sensors for 50000 iterations with $200$ as the batch size, following \cite{gurel2021knowledge}. During certification, we adopt the same smoothing parameter for training to construct the smoothed model based on Monte-Carlo sampling.

For information extraction, we use BERT as our model architecture. During training, we use the final hidden state of the first token [CLS] from BERT as the representation of the whole input and apply dropout with probability $p=0.5$ on this final hidden state. Additionally, there is a fully connected layer added on top of BERT for classification. To fine-tune the BERT classifiers for three information tasks, we use the Adam optimizer with the initial learning rate as $10^{-5}$ and the weight decay parameter as $10^{-4}$. We train all the sensors for 30 epochs, and the batch size $32$. 


\noindent\textbf{Evaluation Metrics.} We adopt the standard \emph{certified accuracy} as our evaluation metric, defined by the percentage of instances that can be certified under certain $\ell_p$-norm bounded perturbations. Specifically, given the input $x$ with ground-truth label $y$, once we can certify the bound of the model's output confidence on predicting label $y$ under the norm-bounded perturbation as $[\mathcal{L}, \mathcal{U}]$, the certified accuracy can be defined by:
$ \frac{1}{N}\sum_{i=1}^{N}\mathbbm{I}([\mathcal{L}_i > 0.5])
$ where $\mathbbm{I(\cdot)}$ denotes the indicator function. 
Since each sensing component's certification is performed by randomized smoothing, which yields the failure probability characterized by $\zeta_0$, we will control the failure probability $\zeta$ for the whole \sr pipeline with $n$ sensing models as $\zeta_0 = 1 - (1 - \zeta)^{1/n}$ by applying the union bound. Throughout all the experiments, $\zeta$ is kept to $0.001$ so our end-to-end certification is guaranteed to be correct with at least $99.9\%$ confidence. 

\begin{table}[t!]
\centering
\caption{{\small{ \bf (Road sign classification)} \textit{Certified accuracy} under different input perturbation magnitudes ($C_I$). Models are smoothed with different Gaussian noises $\epsilon \sim \mathcal{N}(0, \hat{\sigma}^2I_d), \hat{\sigma} \in \{0.12, 0.25, 0.50\}$. Rows with $\ast$ denote the best certified accuracy among all the smoothing parameters for each method. The bold numbers show the higher certified accuracy under the same $(C_I, \hat{\sigma})$ setting and the numbers with underline show the highest certified accuracy for each $C_I$ among different smoothing parameters. (All certificates hold with $p=99.9\%$)} }
\vspace{3mm}
\scalebox{0.9}{
\begin{tabular}{c|c|c|c|c|c}
\toprule
\textbf{Methods}                         & $\hat{\sigma}$ & $C_I = 0.12$        & $C_I = 0.25$        & $C_I = 0.50$        & $C_I = 1.00$        \\ \hline
       & 0.12     & 90.8                & 87.1                & 0.0                 & 0.0                 \\
         Vanilla Smoothing                                   & 0.25     & 89.6                & 88.4                & 71.6                & 0.0                 \\
         (w/o knowledge) & 0.50     & 84.0                & 80.2                & 73.2                & 61.7                \\ 
                                            & $\ast$     & 90.8                & 88.4                & 73.2                & 61.7                \\ \hline
 & 0.12     & {\ul \textbf{96.0}} & \textbf{89.0}       & \textbf{73.2}       & \textbf{24.2}       \\
        Sensing-Reasoning Pipeline & 0.25     & \textbf{93.4}       & {\ul \textbf{91.0}} & \textbf{74.0}       & \textbf{49.2}       \\
(w/ knowledge) & 0.50     & \textbf{89.3}       & \textbf{85.4}       & {\ul \textbf{75.5}} & {\ul \textbf{62.5}} \\ 
                                            & $\ast$     & \textbf{96.0}       & \textbf{91.0}       & \textbf{75.5}       & \textbf{62.5}       \\ \bottomrule
\end{tabular}}
\vspace{-4mm}
\label{tab:road-sign-exp}
\end{table}

\subsection{Results of \textit{Road Sign Classification}}
In this section, we evaluate the certified robustness of our \sr under the $\ell_2$-norm bounded perturbation. We first report the $\ell_2$ certified accuracy of our \sr and compare it to
a strong baseline as a vanilla randomized smoothing trained model (without knowledge). Note that it is flexible to replace the sensing component with other robust training algorithms.
We conduct our evaluation under different smoothing parameters $\hat{\sigma} = \{0.12, 0.25, 0.50\}$ and various $\ell_2$ perturbation magnitudes on the input image $C_I = \{0.12, 0.25, 0.50, 1.00\}$ (Table~\ref{tab:road-sign-exp}).  
During certification, we evaluate our certification time per sample with 25 sensors as 5.39s, which shows that the overall certification time is generally acceptable. 

As shown in Table~\ref{tab:road-sign-exp}, we can see that with knowledge integration, \sr achieves consistently higher certified accuracy compared to the baseline smoothed ML model without knowledge under all the perturbation magnitudes $C_I$ and smoothing parameter $\hat{\sigma}$ settings. Under the small perturbation magnitude cases, our improvement is very significant (around $5\%$).
More interestingly, given large $C_I$ but small smoothing parameter $\hat{\sigma}$, vanilla randomized smoothing-based certification directly fails ($0\%$ certified accuracy) due to the looseness of the hypothesis testing bound, while the \sr could still achieve reasonable certified robustness (over $71\%$ on $C_I = 0.50$, $49\%$ on $C_I = 1.00$) under the same $(C_I, \hat{\sigma})$ settings. This indicates a very realistic case: we always \textbf{\emph{under-estimate}} the attacker's ability easily under the real-world setting -- in this case, the \sr could remain robust even provide reasonable certified accuracy with a conservative smoothing parameter.


\vspace{-0.57em}
\subsection{Results of \textit{Information Extraction}}
\vspace{-0.57em}

In this section, we conduct the certified robustness evaluation on the information extraction task on text data. Since there is no good certification method on discrete NLP data for sensing models, we directly assume the maximal perturbation 
on the output of sensors ($C_S$).
\begin{wraptable}{r}{6.7cm}
\centering
\vspace{-1em}
\caption{\small {{\bf (Information extraction)} \textit{Certified accuracy} under different perturbation magnitudes ($C_S$) based on the sensing models' output uncertainty. (All certificates hold with 99.9\% confidence)  }}
\vspace{-0.57em}
\scalebox{0.63}{
\begin{tabular}{c|c|c|c}
\toprule
\textbf{Methods}                                                                                             & $C_S=0.1$                       & $C_S=0.5$                       & $C_S=0.9$                      \\ \hline
\multirow{2}{*}{\begin{tabular}[c]{@{}c@{}}Vanilla Smoothing\\ (w/o knowledge)\end{tabular}}        & \multirow{2}{*}{99.7}           & \multirow{2}{*}{94.7}           & \multirow{2}{*}{38.4}          \\
                                                                                                    &                                 &                                 &                                \\ \hline
\multirow{2}{*}{\begin{tabular}[c]{@{}c@{}}Sensing-Reasoning Pipeline\\ (w/ knowledge)\end{tabular}} & \multirow{2}{*}{\textbf{100.0}} & \multirow{2}{*}{\textbf{100.0}} & \multirow{2}{*}{\textbf{58.8}} \\
                                                                                                    &                                 &                                 &                                \\ \bottomrule
\end{tabular}
\label{exp:newnlp1}}
\vspace{-2.3mm}
\end{wraptable}
Table~\ref{exp:newnlp1} shows the 
certified accuracy on the final outputs of the reasoning component. We see that the \sr provides significantly higher certified robustness, and even under a high perturbation magnitude on all sensing models' output confidence ($C_S = 0.5$), which means the \sr can still leverage the knowledge to help enhance the robustness given strong attacker.
To further illustrate intuitively why such knowledge-based reasoning helps, Figure~\ref{fig:nlp_margin}
shows the ``margin'' --- the probability 
of the ground truth class minus
the probability of the wrong class ---
with or without knowledge integration.
We see that, with knowledge integration,
we can significantly increase the number
of examples with a large ``margin''
under adversarial perturbations.
This explains the improvement of certified robustness, which highly relies on such prediction confident margin.

We also conduct experiments on PrimateNet, Word50, MNIST, CIFAR10 datasets for the image classification tasks in~\Cref{adx:primate-section}-~\Cref{adx:constructed-knowledge}. We observe similar results that knowledge integration significantly boosts the certified robustness.


\begin{figure*}[t]
\centering
\includegraphics[width=0.98\textwidth]{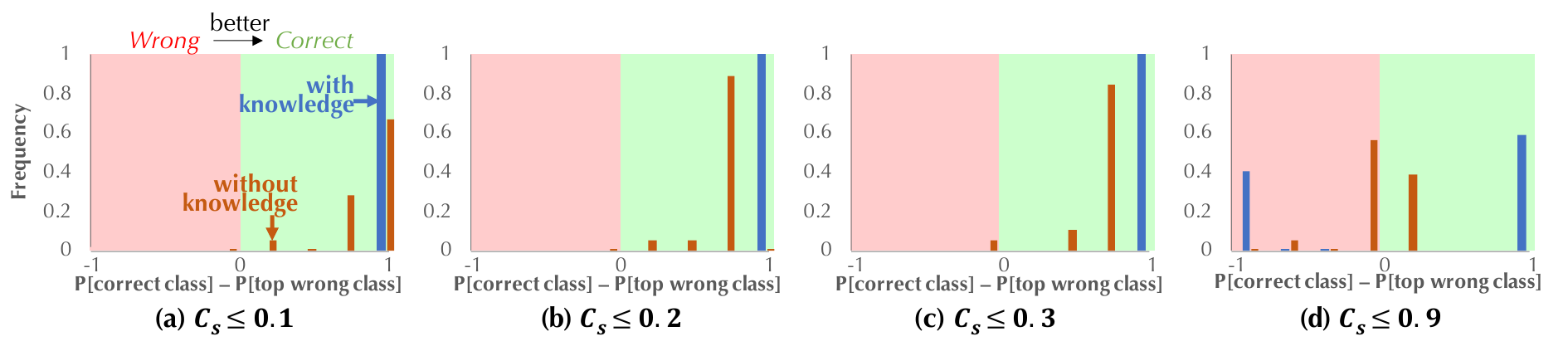}
\caption{ \small{{\bf (Information extraction)} Histogram of the \textbf{robustness margin} (the difference between the probability of the correct
class (lower bound) and the top wrong class (upper bound)) under perturbations.
If such a difference is positive, it means
that the classifier makes the right prediction
under perturbations. }}
\label{fig:nlp_margin}
\end{figure*}

\section{Related Work} 
\label{sec:relatedwork}

{\bf Robustness for Single ML model and ML Ensemble.}
Lots of efforts have been made to improve the robustness of single ML or ensemble models. Adversarial training ~\cite{goodfellow2014explaining}, and its variations~\cite{tramer2017ensemble,madry2017towards,xiao2018characterizing} have generally been more successful in practice, but usually come at the cost of accuracy and increased training time~\cite{tsipras2018there,xiao2018characterizing}. 
To further provide certifiable robustness guarantees for ML models, 
various certifiable defenses and robustness verification approaches have been proposed~\cite{kolter2017provable,tjeng2017evaluating,cohen2019certified,li2022dsrs,li2020sok}. Among these strategies, randomized smoothing~\cite{cohen2019certified} has achieved scalable performance. With improvements in training, including pretraining and adversarial training, the certified robustness bound can be further improved~\cite{carmon2019unlabeled,salman2019provably}. 
In addition to the single ML model, some work proposed to promote the diversity of classifiers and therefore develop a robust ML ensemble~\cite{pang2019improving,yangli2021trs,yang2021certified,yang2022on}.
Although promising, these defense approaches, either empirical or theoretical, can only improve the robustness of a single ML or ensemble model. Certifying or improving the robustness of such single or pure ensemble models is very challenging, given that there is no additional information that can be utilized. In addition, the ML learning process usually favors a pipeline that is able to incorporate different sensing components as well as domain knowledge in practice. Thus, certifying the robustness of such pipelines is of great importance.


{\bf Robustness of End-to-end ML Systems.}
There have been intensive studies on 
joint inference between multiple models, and the  predictions based on joint inference  can help to further improve the clean accuracy of ML pipelines~\cite{xu2019joint,deng2014large,poon2007joint,mccallum-2009-joint,chen-etal-2014-joint-inference,chakrabarti2014joint}, which have been applied to a range of 
real-world applications~\cite{biba2011protein,10.1371/journal.pone.0113523,10.1093/bioinformatics/btv476}. Often, these approaches
use different statistical inference
models such as factor graphs~\cite{MAL-001}, Markov logic networks~\cite{RichardsonMarkov}, and Bayesian 
networks~\cite{10.5555/331969} as a way to integrate 
domain knowledge. In this paper, we take a 
different perspective on this problem ---
instead of treating joint inference as a 
way to improve the {\em clean accuracy}, we
explore the possibility of using it
as exogenous information to improve the end-to-end {\em certified robustness} of ML pipelines.
A recent work~\cite{gurel2021knowledge} explores the empirical robustness improvement via knowledge integration, while there is no robustness guarantee provided.
As we show in this paper, by integrating
domain knowledge, we are able to 
improve the \textit{certified robustness} of the ML pipelines significantly.


\section{Conclusions}
We provide the first certifiably robust \sr with knowledge-based logical reasoning. We theoretically prove the certified robustness of such ML pipelines, and provide complexity analysis for certifying the reasoning component. 
Our extensive empirical results demonstrate the certified robustness of \sr, and we believe our work
would shed light on future research towards improving and certifying robustness for general ML frameworks as well as different ways to integrate logical reasoning with statistical learning.

\paragraph{Acknowledgements} {\scriptsize This work is partially supported by the NSF grant No.1910100,
NSF CNS No.2046726, C3 AI, and the Alfred P. Sloan Foundation.
CZ and the DS3Lab gratefully acknowledge the support from the Swiss State Secretariat for Education, Research and Innovation (SERI) under contract number MB22.00036 (for European Research Council (ERC) Starting Grant TRIDENT 101042665), the Swiss National Science Foundation (Project Number 200021\_184628, and 197485), Innosuisse/SNF BRIDGE Discovery (Project Number 40B2-0\_187132), European Union Horizon 2020 Research and Innovation Programme (DAPHNE, 957407), Botnar Research Centre for Child Health, Swiss Data Science Center, Alibaba, Cisco, eBay, Google Focused Research Awards, Kuaishou Inc., Oracle Labs, Zurich Insurance, and the Department of Computer Science at ETH Zurich.
HG has received funding from the European Research Council (ERC) under the European Union’s Horizon 2020
research and innovation programme (grant agreement No. 947778).}

\newpage
\bibliographystyle{plain}
\bibliography{example_paper,supp_zz}

\iftrue


\iftrue

\newpage
\section*{Checklist}


\begin{enumerate}

\item For all authors...
\begin{enumerate}
  \item Do the main claims made in the abstract and introduction accurately reflect the paper's contributions and scope?
    \answerYes{}
  \item Did you describe the limitations of your work? 
    \answerYes{} We have mentioned the future improvement of our work in the related work part.
  \item Did you discuss any potential negative societal impacts of your work?
    \answerYes{} This work will not infer obvious negative societal impacts.
  \item Have you read the ethics review guidelines and ensured that your paper conforms to them?
    \answerYes{}
\end{enumerate}

\item If you are including theoretical results...
\begin{enumerate}
  \item Did you state the full set of assumptions of all theoretical results?
    \answerYes{} The assumptions have been all mentioned in the main paper and appendices.
        \item Did you include complete proofs of all theoretical results?
    \answerYes{} The whole proofs are provided in Appendix~\ref{apx:hardness} - \ref{apx:sup_alg1}.
\end{enumerate}

\item If you ran experiments...
\begin{enumerate}
  \item Did you include the code, data, and instructions needed to reproduce the main experimental results (either in the supplemental material or as a URL)?
    \answerYes{} The code is provided at~\url{https://github.com/Sensing-Reasoning/Sensing-Reasoning-Pipeline}.
  \item Did you specify all the training details (e.g., data splits, hyperparameters, how they were chosen)?
    \answerYes{} All the training details have been provided in the Appendix~\ref{adx:road-sign-section} -  \ref{sec:apx-partial-knowleddge}.
        \item Did you report error bars (e.g., with respect to the random seed after running experiments multiple times)? 
    \answerYes{} The confidence of the reported certification results in the paper is guaranteed to be at least $99.9\%$, as mentioned in our main paper.
        \item Did you include the total amount of compute and the type of resources used (e.g., type of GPUs, internal cluster, or cloud provider)? 
    \answerYes{} The detailed information is mentioned in Appendix~\ref{adx:road-sign-section}.
\end{enumerate}

\item If you are using existing assets (e.g., code, data, models) or curating/releasing new assets...
\begin{enumerate}
  \item If your work uses existing assets, did you cite the creators?
    \answerYes{}
  \item Did you mention the license of the assets?
    \answerYes{}
  \item Did you include any new assets either in the supplemental material or as a URL?
    \answerYes{}
  \item Did you discuss whether and how consent was obtained from people whose data you're using/curating?
    \answerYes{} We only use public and commonly used data.
  \item Did you discuss whether the data you are using/curating contains personally identifiable information or offensive content?
    \answerYes{} We only use public and commonly used data.
\end{enumerate}

\item If you used crowdsourcing or conducted research with human subjects...
\begin{enumerate}
  \item Did you include the full text of instructions given to participants and screenshots, if applicable?
    \answerNA{}
  \item Did you describe any potential participant risks, with links to Institutional Review Board (IRB) approvals, if applicable?
    \answerNA{}
  \item Did you include the estimated hourly wage paid to participants and the total amount spent on participant compensation?
    \answerNA{}
\end{enumerate}

\end{enumerate}

\onecolumn

\appendix
\newpage

\def\Counting{\textnormal{\textsc{Counting}}}
\def\Robustness{\textnormal{\textsc{Robustness}}}

\section{Hardness of General Distribution }
\label{apx:hardness}
We first recall the following definitions:

\noindent \textbf{Counting.}
Given input polynomial-time computable weight function $w(\cdot)$ and query function $Q(\cdot)$, parameters $\alpha$, a real number $\eps>0$, a \textsc{Counting} oracle outputs a real number $Z$ such that 
\[
1-\eps\le \frac{Z}{\E[\sigma\sim\pi_{\alpha}]{Q(\sigma)}}\le 1+\eps.
\]

\def\Robustness{\textnormal{\textsc{Robustness}}}

\noindent \textbf{Robustness.}
Given input polynomial-time computable weight function $w(\cdot)$ and query function $Q(\cdot)$, parameters $\alpha$, two real numbers $\epsilon >0$ and $\delta>0$, a \textsc{Robustness} oracle decides, for any $\alpha'\in P^{[m]}$ such that $\norm{\alpha-\alpha'}_\infty \le \epsilon$, whether the following is true:
\begin{align*}
  \abs{\E[\sigma\sim\pi_{\alpha}]{Q(\sigma)}-\E[\sigma\sim\pi_{\alpha'}]{Q(\sigma)}}<\delta.
\end{align*}

\def\reduceT{\le_\texttt{t}}
\subsubsection*{Proof of Theorem \ref{hardness_from_reduction}}

\newtheorem*{theorem hardness_from_reduction}{Theorem \ref{hardness_from_reduction}}
\begin{theorem hardness_from_reduction}[$\Counting\reduceT\Robustness$]
Given polynomial-time computable weight function $w(\cdot)$ and query function $Q(\cdot)$, parameters $\alpha$ and real number $\eps>0$, the instance of \textsc{Counting}, $(w,Q,\alpha,\eps)$ can be determined by up to $O(1/\varepsilon_c^2)$ queries of the \textsc{Robustness} oracle with input perturbation $\epsilon=O(\varepsilon_c)$.
\label{supp:theorem:counting-to-robustness}
\end{theorem hardness_from_reduction}
\begin{proof}
  Let $(w,Q,\alpha,\eps)$ be an instance of \Counting.
  Define a new distribution $\tau_{\beta}$ over $\*X$ with a single parameter $\beta\in\mathbb{R}$ such that
$
    \tau_{\beta}(\sigma) \propto t(\sigma;\beta),
$
  where
$
    t(\sigma;\beta) = w(\sigma;\alpha)\exp(\beta Q(\sigma)).
$
  Since $Q$ is polynomial-time computable,
  $\tau_{\beta}$ is accessible for any $\beta$.
  We will choose $\beta$ later. For $i\in\{0,1\}$, define
$
    Z_i:=\sum_{\sigma:Q(\sigma)=i} w(\sigma;\alpha).
$
  Then we have 
  {
  $$\E[\sigma\sim\pi_{\alpha}]{Q(\sigma)} = \frac{Z_1}{Z_0+Z_1},\quad \E[\sigma\sim\tau_{\beta}]{Q(\sigma)} = \frac{e^\beta Z_1}{Z_0+e^\beta Z_1}.$$
  }
We further define
  \begin{align*}
    Y^+(\beta,x)&:=\E[\sigma\sim\tau_{\beta+x}]{Q(\sigma)}-\E[\sigma\sim\tau_{\beta}]{Q(\sigma)} \\
    &= \frac{e^x e^\beta Z_1}{Z_0+ e^x e^\beta Z_1} - \frac{e^\beta Z_1}{Z_0+e^\beta Z_1}\\
    & = \frac{(e^{x}-1)e^\beta Z_0Z_1}{(Z_0+ e^x e^\beta Z_1)(Z_0+ e^\beta Z_1)}= \frac{(e^{x}-1)e^\beta}{R+(e^x+1)e^\beta+\frac{e^x e^{2\beta}}{R}},
  \end{align*}
  where $R:=\frac{Z_0}{Z_1}$,
  and similarly
  \begin{align*}
    Y^-(\beta,x)&:=\E[\sigma\sim\tau_{\beta}]{Q(\sigma)}-\E[\sigma\sim\tau_{\beta-x}]{Q(\sigma)} \\
    &= \frac{e^\beta Z_1}{Z_0+e^\beta Z_1}-\frac{e^{-x} e^\beta Z_1}{Z_0+ e^{-x} e^\beta Z_1}\\
    & = \frac{(1-e^{-x})e^\beta Z_0Z_1}{(Z_0+ e^{-x} e^\beta Z_1)(Z_0+ e^\beta Z_1)} = \frac{(e^{x}-1)e^\beta}{e^xR+(e^{x}+1)e^\beta+\frac{e^{2\beta}}{R}}.
  \end{align*}
  Easy calculation implies that for $x>0$, $Y^+(\beta,x)>Y^-(\beta,x)$ if and only if $R>e^{\beta}$. Note that
  \begin{align*}
    Y^+(\beta,x) &= \frac{e^x-1}{Re^{-\beta}+e^x+1+\frac{e^{x+\beta}}{R}}\le\frac{e^{x/2}-1}{e^{x/2}+1};\\
    Y^-(\beta,x) &= \frac{e^x-1}{Re^{x-\beta}+e^x+1+\frac{e^{\beta}}{R}}\le\frac{e^{x/2}-1}{e^{x/2}+1}.
  \end{align*}
  The two maximum are achieved when $R=e^{\beta\pm x/2}$.
  We will choose $x=O(\eps)$. Define 
  \begin{align*}
    Y(\beta):=& \max\{Y^+(\beta,x),Y^-(\beta,x)\}\\
    =&
    \begin{cases}
      \frac{e^x-1}{Re^{-\beta}+e^x+1+\frac{e^{x+\beta}}{R}} & \text{if $e^\beta<R$;}\\
      \frac{e^x-1}{Re^{x-\beta}+e^x+1+\frac{e^{\beta}}{R}} & \text{if $e^\beta\ge R$.}\\      
    \end{cases}
  \end{align*}
  This function $Y$ is increasing in $[0,\log R-x/2]$, decreasing in $[\log R-x/2,\log R]$,
  increasing in $[\log R, \log R+x/2]$ again, and decreasing in $[\log R+x/2,\infty)$ once again.

  Our goal is to estimate $R$.
  For any fixed $\beta$, we will query the \Robustness{} oracle with parameters $(t,Q,\beta,x,\delta)$.
  Using binary search in $\delta$, we can estimate the function $Y(\beta)$ above efficiently with additive error $\eps'$ with at most $O(\log\frac{1}{\eps'})$ oracle calls.
  We use binary search once again in $\beta$ so that it stops only if $Y(\beta_0)\ge \frac{e^{x/2}-1}{e^{x/2}+1}-\varepsilon_0$ for some $\beta_0$ and the accuracy $\varepsilon_0\le\frac{e^{x/2}-1}{2(e^{x/2}+1)}$ is to be fixed later.
  In particular, $Y(\beta_0)\ge\frac{e^{x/2}-1}{2(e^{x/2}+1)}$.
  Note that here $\varepsilon_0$ is the accumulated error from binary searching twice.
  
  We claim that $\beta_0$ is a good estimator for $\log R$.
  First assume that $e^{\beta_0}<R$, which implies that
  \begin{align*}
    \frac{e^{x/2}-1}{e^{x/2}+1} - Y(\beta_0)& = \frac{e^{x/2}-1}{e^{x/2}+1} - \frac{e^x-1}{Re^{-\beta_0}+e^x+1+\frac{e^{x+\beta_0}}{R}} \\
    &= \frac{(e^x-1)}{(e^{x/2}+1)^2(Re^{-\beta_0}+e^x+1+\frac{e^{x+\beta_0}}{R})} \times \\
    &~~~~\left( \sqrt{Re^{-\beta_0}}-\sqrt{\frac{e^{x+\beta_0}}{R}} \right)^2\\
    & = \frac{Y(\beta_0)}{(e^{x/2}+1)^2} \left( \sqrt{Re^{-\beta_0}}-\sqrt{\frac{e^{x+\beta_0}}{R}} \right)^2 \le \varepsilon_0.
  \end{align*}
  Thus,
  \begin{align*}
    \abs{\sqrt{Re^{-\beta_0}}-\sqrt{\frac{e^{x+\beta_0}}{R}}} \le \sqrt{\frac{2\varepsilon_0(e^{x/2}+1)^3}{e^{x/2}-1}}.
  \end{align*}
  Let $\rho:=Re^{-\beta_0}$. Note that $\rho>1$.
  We choose $\varepsilon_0:=\frac{1}{2}\left(\frac{e^{x/2}-1}{e^{x/2}+1} \right)^3$.
  Then $\abs{\sqrt{\rho}-\sqrt{e^x/\rho}} < e^{x/2}-1$.
  If $\rho\ge e^x$, then $\abs{\sqrt{\rho}-\sqrt{e^x/\rho}}\ge e^{x/2}-1$, a contradiction.
  Thus, $\rho < e^x$.
  It implies that $1<\frac{R}{e^{\beta_0}}<e^x$.
  Similarly for the case of $e^{\beta_0}>R$, we have that $e^{-x}<\frac{R}{e^{\beta_0}}<1$.
  Thus in both cases, we have our estimator $e^{-x}<\frac{R}{e^{\beta_0}}<e^x$.
  
  Finally, to estimate $\E[\sigma\sim\pi_{\alpha}]{Q(\sigma)} =\frac{1}{1+R}$ with multiplicative error $\eps$,
  we only need to pick $x:=\log(1+\eps)=O(\eps)$.
\end{proof}

\section{Robustness of MLN}
\label{apx:robustness_mln}
\subsubsection*{Lagrange multipliers} Before proving the robustness result of MLN, we first briefly review the technique of Lagrange multipliers for constrained optimization: Consider following problem $\texttt{P}$,\[
\texttt{P:}~~~~\max_{x_1, x_2} f(x_1) + g(x_2),
 ~~~~s.t., x_1 = x_2, ~~~ h(x_1), k(x_2) \ge 0.
\]
\noindent Introducing another real variable 
$\lambda$, we define
following problem $\texttt{P'}$,\[
\texttt{P':}~~~~\max_{x_1, x_2} f(x_1) + g(x_2) + \lambda(x_1 - x_2),
 ~~~~s.t., h(x_1), k(x_2) \ge 0.
\]
For all $\lambda$,
let $(x_1^*, x_2^*)$ be the
solution of \texttt{P} and let
$(\bar{x_1}, \bar{x_2})$ be the
solution of \texttt{P'}, we have\[
f(x_1^*) + g(x_2^*) \leq 
f(x_1^*) + g(x_2^*) + \lambda(x_1^* - x_2^*)
\leq
f(\bar{x_1}) + g(\bar{x_2}) + \lambda(\bar{x_1} - \bar{x_2})
\]
\subsubsection*{Proof of Lemma \ref{MLN Robustness}}
\newtheorem*{lemmaMLN}{Lemma \ref{MLN Robustness}}
\begin{lemmaMLN}[MLN Robustness]
Given access to partition functions $Z_1(\{p_i(X)\}_{i \in [n]})$ and 
$Z_2(\{p_i(X)\}_{i \in [n]})$, and a maximum perturbations $\{C_i\}_{i\in[n]}$, 
$\forall \epsilon_1, ..., \epsilon_n$, if $\forall i.~|\epsilon_i| < C_i$,
we have that $\forall \lambda_1, ..., \lambda_n \in \mathbb{R}$,
\begin{align*}
\max_{\{|\epsilon_i < C_i|\}} \ln \mathbb{E}[R_{MLN}(\{p_i(X) + \epsilon_i\}_{i \in [n]})] 
\leq& \max_{\{|\epsilon_i|<C_i\}}
\widetilde{Z_1}(\{\epsilon_i\}_{i \in [n]}) \\
 -&
\min_{\{|\epsilon_i'|<C_i\}}
\widetilde{Z_2}(\{\epsilon_i'\}_{i \in [n]}) 
\end{align*}
\begin{align*}
\min_{\{|\epsilon_i < C_i|\}} \ln \mathbb{E}[R_{MLN}(\{p_i(X) + \epsilon_i\}_{i \in [n]})] \geq & \min_{\{|\epsilon_i|<C_i\}}
\widetilde{Z_1}(\{\epsilon_i\}_{i \in [n]}) \\
 -&
\max_{\{|\epsilon_i'|<C_i\}}
\widetilde{Z_2}(\{\epsilon_i'\}_{i \in [n]}) 
\end{align*}
where\[
\widetilde{Z_r}(\{\epsilon_i\}_{i \in [n]}) = \ln Z_r(\{p_i(X)+\epsilon_i\}_{i \in [n]}) + \sum_i \lambda_i \epsilon_i.
\]
\end{lemmaMLN}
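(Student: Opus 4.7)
The plan is to prove the upper bound; the lower bound is symmetric. The starting identity is
\[
\ln \mathbb{E}[R_{MLN}(\{p_i(X)+\epsilon_i\}_{i\in[n]})] \;=\; \ln Z_1(\{p_i(X)+\epsilon_i\}) - \ln Z_2(\{p_i(X)+\epsilon_i\}),
\]
so the quantity to be bounded is a difference of two log-partition functions evaluated at the \emph{same} perturbation vector $\{\epsilon_i\}$. The obstacle, and the reason a direct ``maximize $Z_1$, minimize $Z_2$'' argument fails, is precisely this coupling: one cannot simultaneously push the numerator up and the denominator down using independent perturbations.

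To decouple, I would introduce two copies of the perturbation variables, $\{\epsilon_i\}$ and $\{\epsilon_i'\}$, and rewrite
\[
\max_{\{|\epsilon_i|<C\}} \bigl[\ln Z_1(\{p_i(X)+\epsilon_i\}) - \ln Z_2(\{p_i(X)+\epsilon_i\})\bigr]
= \max_{\substack{|\epsilon_i|<C,\;|\epsilon_i'|<C\\ \epsilon_i=\epsilon_i'\;\forall i}} \bigl[\ln Z_1(\{p_i(X)+\epsilon_i\}) - \ln Z_2(\{p_i(X)+\epsilon_i'\})\bigr].
\]
Next I apply the Lagrange multiplier recipe reviewed in the appendix: for any real vector $\{\lambda_i\}$, adding the term $\sum_i \lambda_i(\epsilon_i - \epsilon_i')$ to the objective leaves the value unchanged on the feasible set $\{\epsilon_i=\epsilon_i'\}$ and can only increase the value when the equality constraints are dropped. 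Hence
\[
\max_{\text{coupled}} \bigl[\ln Z_1 - \ln Z_2\bigr] \;\le\; \max_{\substack{|\epsilon_i|<C\\|\epsilon_i'|<C}} \Bigl[\ln Z_1(\{p_i(X)+\epsilon_i\}) + \sum_i\lambda_i\epsilon_i \;-\; \ln Z_2(\{p_i(X)+\epsilon_i'\}) - \sum_i \lambda_i \epsilon_i'\Bigr].
\]

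Because the two copies of the perturbation are now completely independent in the relaxed problem, the joint maximum separates into
\[
\max_{\{|\epsilon_i|<C\}} \bigl[\ln Z_1(\{p_i(X)+\epsilon_i\}) + \textstyle\sum_i \lambda_i \epsilon_i\bigr] \;+\; \max_{\{|\epsilon_i'|<C\}} \bigl[-\ln Z_2(\{p_i(X)+\epsilon_i'\}) - \textstyle\sum_i \lambda_i \epsilon_i'\bigr],
\]
which is exactly
\[
\max_{\{|\epsilon_i|<C\}} \widetilde{Z_1}(\{\epsilon_i\}) \;-\; \min_{\{|\epsilon_i'|<C\}} \widetilde{Z_2}(\{\epsilon_i'\}),
\]
using the definition $\widetilde{Z_r}(\{\epsilon_i\}) = \ln Z_r(\{p_i(X)+\epsilon_i\}) + \sum_i \lambda_i \epsilon_i$. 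Since the bound holds for every choice of $\{\lambda_i\}$, the upper bound statement follows.

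For the companion lower bound, I would run the same argument with $\min$ and $\max$ swapped: introduce the split $\epsilon_i,\epsilon_i'$ with $\epsilon_i=\epsilon_i'$, add $\sum_i\lambda_i(\epsilon_i-\epsilon_i')$ without changing the value on the feasible set, and then relax the constraints, which can only decrease the minimum. The separable relaxed problem yields $\min_{\{|\epsilon_i|<C\}} \widetilde{Z_1} - \max_{\{|\epsilon_i'|<C\}} \widetilde{Z_2}$. The main conceptual step throughout is the variable-splitting; after that, the Lagrangian relaxation and separation of the optimization are mechanical. Tightness of the bound over $\{\lambda_i\}$ is not claimed in the lemma, which is why the inequality holds for every choice of multipliers and the algorithm in the paper merely searches over a tractable subset of them.
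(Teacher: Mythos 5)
Your proposal is correct and follows essentially the same route as the paper's own proof: split the perturbation into two independent copies $\{\epsilon_i\},\{\epsilon_i'\}$ constrained to be equal, add the Lagrangian term $\sum_i\lambda_i(\epsilon_i-\epsilon_i')$ which vanishes on the feasible set, relax the equality constraints (which can only increase a max or decrease a min), and separate the resulting decoupled optimization into $\max\widetilde{Z_1}-\min\widetilde{Z_2}$ (resp.\ $\min\widetilde{Z_1}-\max\widetilde{Z_2}$). No gaps; the argument matches the paper's appendix proof step for step.
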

\begin{proof}
Consider the upper bound, we have 
\begin{small}\begin{align*}
\max_{\{|\epsilon_i < C_i|\}}\ln\mathbb{E}[R_{MLN}(\{p_i(X) + \epsilon_i\}_{i \in [n]})]\nonumber =& \max_{\{|\epsilon_i < C_i|\}}\ln\left(\frac{Z_1(\{p_i(X)+\epsilon_i \}_{i\in [n]})}{Z_2(\{p_i(X)+\epsilon_i \}_{i\in [n]})}\right) \nonumber\\
=&\max_{\{\epsilon_i\}, \{\epsilon_i'\}} \ln Z_1(\{p_i(X)+\epsilon_i \}_{i\in [n]})\\ &- \ln Z_2(\{p_i(X)+\epsilon_i' \}_{i\in [n]})\nonumber\\
& s.t.,  \epsilon_i = \epsilon_i', ~~~~ 
|\epsilon_i|,|\epsilon_i'|  \leq C_i.
\end{align*}\end{small}
Introducing Lagrange multipliers $\{\lambda_i\}$. Note that any choice of $\{\lambda_i\}$ corresponds to a valid upper bound. Thus $\forall \lambda_1, ...,\lambda_n \in \mathbb{R}$, we can reformulate the above into
\begin{small}\begin{align*}
\max_{\{|\epsilon_i < C_i|\}}\ln\mathbb{E}[R_{MLN}(\{p_i(X) + \epsilon_i\}_{i \in [n]})]\nonumber
&\le \max_{\{\epsilon_i\}, \{\epsilon_i'\}} 
\ln Z_1(\{p_i(X)+\epsilon_i \}_{i\in [n]}) \\
&- \ln Z_2(\{p_i(X)+\epsilon_i' \}_{i\in [n]})\\
&+\sum_i\lambda_i (\epsilon_i - \epsilon_i'),\nonumber\\ 
&s.t., |\epsilon_i|, |\epsilon_i'| \leq C_i.
\end{align*}\end{small}
Define
\begin{align*}
&\widetilde{Z_1}(\{\epsilon_i\}_{i \in [n]}) = \ln Z_1(\{p_i(X)+\epsilon_i\}_{i \in [n]}) + \sum_i \lambda_i \epsilon_i;\nonumber\\
&\widetilde{Z_2}(\{\epsilon_i'\}_{i \in [n]}) = \ln Z_1(\{p_i(X)+\epsilon'_i\}_{i \in [n]}) + \sum_i \lambda_i \epsilon'_i,
\end{align*}
We have the claimed upper-bound,
\begin{align*}
\max_{\{|\epsilon_i < C_i|\}} \ln \mathbb{E}[R_{MLN}(\{p_i(X) + \epsilon_i\}_{i \in [n]})] 
\leq & \max_{\{|\epsilon_i|<C_i\}}
\widetilde{Z_1}(\{\epsilon_i\}_{i \in [n]}) \\
 &-
\min_{\{|\epsilon_i'|<C_i\}}
\widetilde{Z_2}(\{\epsilon_i'\}_{i \in [n]}). 	
\end{align*}
Similarly, the lower-bound can be written in terms of Lagrange multipliers, and $\forall \lambda_1, ...,\lambda_n \in \mathbb{R}$, we have
\begin{small}
\begin{align*}
\min_{\{|\epsilon_i < C_i|\}}\ln\mathbb{E}[R_{MLN}(\{p_i(X) + \epsilon_i\}_{i \in [n]})]\nonumber \ge &\min_{\{\epsilon_i\}, \{\epsilon_i'\}} 
\ln Z_1(\{p_i(X)+\epsilon_i \}_{i\in [n]}) \\ 
&- \ln Z_2(\{p_i(X)+\epsilon_i' \}_{i\in [n]})\\
&+\sum_i\lambda_i (\epsilon_i - \epsilon_i'),\nonumber\\ 
&~~~~s.t., |\epsilon_i|, |\epsilon_i'| \leq C_i.
\end{align*}\end{small}
Hence we have the claimed lower-bound,
\begin{small}
\begin{align*}
\min_{\{|\epsilon_i| < C_i\}} \ln \mathbb{E}[R_{MLN}(\{p_i(X) + \epsilon_i\}_{i \in [n]})] 
\geq & \min_{\{|\epsilon_i|<C_i\}} 
\widetilde{Z_1}(\{\epsilon_i\}_{i \in [n]})\\ 
 & -
\max_{\{|\epsilon_i'|<C_i\}}
\widetilde{Z_2}(\{\epsilon_i'\}_{i \in [n]}). 	
\end{align*}\end{small}
\end{proof}
\section{Supplementary Results for Algorithm 1}
\label{apx:sup_alg1}

\newtheorem*{proposition*}{Proposition}

\begin{proposition*}[Monotonicity]
When $\lambda_i \geq 0$,
$\widetilde{Z_r}(\{\epsilon_i\}_{i \in [n]})$
monotonically increases w.r.t. $\epsilon_i$; 
When $\lambda_i \leq -1$, $\widetilde{Z_r}(\{\epsilon_i\}_{i \in [n]})$
monotonically decreases w.r.t. $\epsilon_i$.
\end{proposition*}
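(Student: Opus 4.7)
The plan is to compute $\partial\widetilde{Z_r}/\partial\epsilon_i$ via the chain rule and read off its sign using a standard exponential-family identity, with the claim reducing in each case to the bound $\mu_i^{(r)}\in[0,1]$ on a single marginal.

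First I would observe that, as a function of the interface weights $w_{G_i}=\log\frac{q_i}{1-q_i}$ with $q_i:=p_i(X)+\epsilon_i$, $\ln Z_r$ is the log-partition function of the Gibbs measure $\pi_r$ obtained by restricting the factor-graph distribution to the set of possible worlds $\Sigma_r$ that defines $Z_r$ (so $\Sigma_1=\{\sigma:\sigma(v)=1\}$ and $\Sigma_2=\Sigma$). Because the interface factor $f_{G_i}$ is the feature $\sigma(x_i)$ and $w_{G_i}$ is its natural parameter, the classical identity for log-partition functions gives
\[
\frac{\partial \ln Z_r}{\partial w_{G_i}}=\mathbb{E}_{\sigma\sim\pi_r}[\sigma(x_i)]=:\mu_i^{(r)}\in[0,1],
\]
with no further assumption on the interior factors. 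This is the only structural fact about the MLN the argument needs, so $Z_r$ itself never has to be computed.

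Next I would pass to the log-odds coordinate $\tilde\epsilon_i:=w_{G_i}(q_i)-w_{G_i}(p_i)$, which is strictly increasing in $\epsilon_i$ on the feasible range $q_i\in(0,1)$; therefore $\widetilde{Z_r}$ is monotone in $\epsilon_i$ if and only if it is monotone in $\tilde\epsilon_i$ with the same direction. In this coordinate the chain rule together with the identity above collapses the Lagrangian-penalized quantity to
\[
\frac{\partial \widetilde{Z_r}}{\partial\tilde\epsilon_i}=\mu_i^{(r)}+\lambda_i.
\]
The coordinate change is the crucial step: in the raw $\epsilon_i$ coordinate the derivative of $\ln Z_r$ carries a Jacobian factor $1/(q_i(1-q_i))$ that does not cancel against a constant $\lambda_i$, and it is precisely absorbing this Jacobian that produces a threshold independent of $q_i$.

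Finally the two cases fall out immediately from $\mu_i^{(r)}\in[0,1]$. For $\lambda_i\ge 0$, the derivative is $\mu_i^{(r)}+\lambda_i\ge 0$, so $\widetilde{Z_r}$ is monotone non-decreasing in $\tilde\epsilon_i$ and hence in $\epsilon_i$. For $\lambda_i\le -1$, the derivative is $\mu_i^{(r)}+\lambda_i\le 1+\lambda_i\le 0$, so $\widetilde{Z_r}$ is monotone non-increasing in $\tilde\epsilon_i$ and hence in $\epsilon_i$. The main obstacle I expect is the bookkeeping between the raw and log-odds coordinates; once the log-odds coordinate is chosen the entire proposition is a one-line derivative calculation together with the two elementary inequalities $\mu_i^{(r)}\ge 0$ and $\mu_i^{(r)}\le 1$.
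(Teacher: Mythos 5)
Your proposal is correct and follows essentially the same route as the paper's proof: both pass to the log-odds perturbation $\tilde\epsilon_i$ (with the Lagrangian penalty attached in that coordinate, which is exactly the subtlety you flag), and both reduce the claim to the fact that the coefficient of $\tilde\epsilon_i$ in the exponent is $\sigma(x_i)+\lambda_i$ with $\sigma(x_i)\in\{0,1\}$. The only cosmetic difference is that you phrase this via the log-partition derivative identity $\partial\ln Z_r/\partial w_{G_i}=\mu_i^{(r)}\in[0,1]$, whereas the paper argues term-by-term monotonicity of the sum over possible worlds; these are the same computation.
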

\begin{proof}
Recall that by definition we have
\begin{align*}
& Z_r(\{p_i(X)+\epsilon_i \}_{i \in [n]}) 
= \\
&\sum_{\sigma \in \mathcal{I}_r}\exp\left\{ 
\sum_{G_i \in \mathcal{G}} w_{G_i}(p_i(X)+\epsilon_i ) \sigma(x_i) +
\sum_{H \in \mathcal{H}} w_{H} f_{H}(\sigma(\bar{\bf v}_H))
\right\}
\end{align*}
where $w_{G_i}(p_i(x)) = \log [p_i(X) / (1 - p_i(X))]$ and $\mathcal{I}_1=\Sigma \wedge \{\sigma(v) = 1\}$ and $\mathcal{I}_2=\Sigma$.
We can rewrite the perturbation on $p_i(X)$ as a perturbation on $w_{G_i}$:
$
	 w_{G_i}(p_i(X)+\epsilon_i) = w_{G_i} + \tilde{\epsilon}_i,
$

where 
\begin{align*}
\tilde{\epsilon}_i=\log\left[\frac{(1-p_i(X))(p_i(X)+\epsilon_i)}{p_i(X)(1-p_i(X)-\epsilon_i)}\right].	
\end{align*} 
Note that $\tilde{\epsilon}_i$ is monatomic in $\epsilon_i$. We also have
\begin{align*}
 \ln \mathbb{E}[R_{MLN}(\{p_i(X) + \epsilon_i\}_{i \in [n]})] =  \ln \mathbb{E}[R_{MLN}(\{w_{G_i}(X) + \tilde{\epsilon}_i\}_{i \in [n]})] 	
\end{align*}
We can hence apply the same Lagrange multiplier procedure as in the above proof of Lemma 6 and conclude that 
\begin{align*}
\widetilde{Z_r}(\{\epsilon_i\}_{i \in [n]}) :=& \ln Z_r(\{p_i(X)+\epsilon_i\}_{i \in [n]}) + \sum_i \lambda_i \epsilon_i	\nonumber \\
=& \ln Z_r(\{w_{G_i}(X)+ \tilde{\epsilon}_i\}_{i \in [n]}) + \sum_i \lambda_i \tilde{\epsilon}_i,
\end{align*}
where $\epsilon_i\in[-C_i,C_i]$ $\tilde{\epsilon}_i\in [-C_i',C_i']$ with $C_i'=\log\left[\frac{(1-p_i(X))(p_i(X)+C_i)}{p_i(X)(1-p_i(X)-C_i)}\right]$.
We are now in the position to rewrite $\widetilde{Z_r}$ as a function of $\tilde{\epsilon}_i$ and obtain
\begin{small}
\begin{align*}
&\widetilde{Z_r}(\{\epsilon_i'\}_{i \in [n]}) \\
=&\ln\sum_{\sigma \in \mathcal{I}_r}\exp\left\{ 
\sum_{G_i \in \mathcal{G}} (w_{G_i}+\tilde{\epsilon}_i)\sigma(x_i) +
\sum_{H \in \mathcal{H}} w_{H} f_{H}(\sigma(\bar{\bf v}_H))
\right\} + \sum_i\lambda_i\tilde{\epsilon}_i\nonumber\\
=&  \ln\sum_{\sigma \in \mathcal{I}_r}\exp\left\{ 
\sum_{G_i \in \mathcal{G}} w_{G_i}\sigma(x_i) +\sum_i(\sigma(x_i)+\lambda_i) \tilde{\epsilon}_i+
\sum_{H \in \mathcal{H}} w_{H} f_{H}(\sigma(\bar{\bf v}_H))
\right\}
\end{align*}\end{small}
Since $\sigma(x_i)\in\{0,1\}$, when $\lambda_i\ge 0$, $\sigma(x_i)+\lambda_i\ge 0$ and $\widetilde{Z_r}$ monotonically increases in $\tilde{\epsilon}_i$ and hence in $\epsilon_i$. When $\lambda_i\le -1$, $\sigma(x_i)+\lambda_i\le 0$ and $\widetilde{Z_r}$ monotonically decreases in $\tilde{\epsilon}_i$ and hence in $\epsilon_i$.
\end{proof}
\begin{proposition*}[Convexity]
$\widetilde{Z_r}(\{\tilde{\epsilon}_i\}_{i \in [n]})$ is a convex function in $\tilde{\epsilon}_i, \forall i$ with
$$\tilde{\epsilon}_i=\log\left[\frac{(1-p_i(X))(p_i(X)+\epsilon_i)}{p_i(X)(1-p_i(X)-\epsilon_i)}\right].$$
\end{proposition*}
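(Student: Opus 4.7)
The plan is to build directly on the reformulation already produced in the proof of the Monotonicity proposition above. After the change of variables $\epsilon_i \mapsto \tilde\epsilon_i$ (which absorbs the nonlinearity of the map $p_i(X)+\epsilon_i \mapsto w_{G_i}$), we already have
\begin{align*}
\widetilde{Z_r}(\{\tilde\epsilon_i\}_{i\in[n]})
= \ln \sum_{\sigma \in \mathcal{I}_r} \exp\!\left\{ c_\sigma + \sum_{i=1}^n (\sigma(x_i) + \lambda_i)\, \tilde\epsilon_i \right\},
\end{align*}
where the constant $c_\sigma := \sum_{G_i \in \mathcal{G}} w_{G_i}\sigma(x_i) + \sum_{H \in \mathcal{H}} w_H f_H(\sigma(\bar{\bf v}_H))$ does not depend on $\tilde\epsilon$.

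My strategy is then to invoke the standard fact that the log-sum-exp function $\mathrm{lse}(u_1,\dots,u_N) = \ln\sum_k e^{u_k}$ is convex on $\mathbb{R}^N$, and that convexity is preserved under composition with an affine map. Concretely, for each possible world $\sigma \in \mathcal{I}_r$, the map $\tilde\epsilon \mapsto u_\sigma(\tilde\epsilon) := c_\sigma + \sum_i (\sigma(x_i) + \lambda_i)\, \tilde\epsilon_i$ is affine in $\tilde\epsilon = (\tilde\epsilon_1,\dots,\tilde\epsilon_n)$, and $\widetilde{Z_r}(\tilde\epsilon) = \mathrm{lse}\bigl(\{u_\sigma(\tilde\epsilon)\}_{\sigma\in\mathcal{I}_r}\bigr)$. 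Because the outer $\mathrm{lse}$ is convex and each $u_\sigma$ is affine, the composition is jointly convex in the full vector $\tilde\epsilon$, and a fortiori convex in each coordinate $\tilde\epsilon_i$ individually, which is exactly what the proposition asks for.

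If one prefers a self-contained argument rather than quoting convexity of $\mathrm{lse}$, the cleanest route is to compute the Hessian of $\widetilde{Z_r}$ directly. Defining the tilted probability measure
\begin{align*}
\pi_\sigma(\tilde\epsilon) \;=\; \frac{\exp(u_\sigma(\tilde\epsilon))}{\sum_{\sigma'\in\mathcal{I}_r}\exp(u_{\sigma'}(\tilde\epsilon))},
\end{align*}
a routine differentiation shows that $\nabla^2 \widetilde{Z_r}$ equals the covariance matrix of the random vector $\bigl(\sigma(x_1)+\lambda_1,\dots,\sigma(x_n)+\lambda_n\bigr)$ with $\sigma$ distributed according to $\pi_\sigma$, and every covariance matrix is positive semidefinite. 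I do not foresee any genuine obstacle here: the real work was already carried out in the variable change to $\tilde\epsilon$, which made the exponent affine. Observe that in the original $\epsilon_i$ coordinates $\widetilde{Z_r}$ need not be convex, because $\epsilon_i \mapsto \tilde\epsilon_i$ is nonlinear; this is precisely why the proposition is stated in terms of $\tilde\epsilon_i$ and why the authors use this convexity in regime $\lambda_i \in (-1,0)$ of Algorithm~\ref{alg:MLNRobustness} to reduce the search for an extremum to either the endpoints $\tilde\epsilon_i \in \{-C',C'\}$ or a zero-gradient interior point.
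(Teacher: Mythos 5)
Your proposal is correct, and your fallback route is in substance the paper's own proof: the paper differentiates $\widetilde{Z_r}$ twice with respect to a single $\tilde{\epsilon}_i$ and identifies the result as $\mathbb{E}\left[(\sigma(x_i)+\lambda_i)^2\right]-\mathbb{E}\left[\sigma(x_i)+\lambda_i\right]^2\ge 0$, i.e.\ exactly the diagonal entry of the covariance matrix you describe. Your primary route --- writing $\widetilde{Z_r}$ as the log-sum-exp of the affine maps $u_\sigma(\tilde\epsilon)=c_\sigma+\sum_i(\sigma(x_i)+\lambda_i)\tilde\epsilon_i$ and invoking convexity of log-sum-exp under affine precomposition --- is a cleaner and slightly stronger argument: it yields \emph{joint} convexity of $\widetilde{Z_r}$ in the full vector $(\tilde\epsilon_1,\dots,\tilde\epsilon_n)$, whereas the paper only establishes (and only needs) convexity in each coordinate separately. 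Both arguments hinge on the same observation that the change of variables to $\tilde\epsilon_i$ makes the exponent affine, and your remark that convexity can fail in the original $\epsilon_i$ coordinates correctly explains why the proposition is stated in the tilde variables. There is no gap.
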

\begin{proof}
We take the second derivative of $\widetilde{Z_r}$ with respect to $\tilde{\epsilon}_i$,
\begin{tiny}
\begin{align*}
&\frac{\partial^2\widetilde{Z_r}}{\partial\epsilon_1^2}
=\\
&\frac{\sum_{\sigma \in \mathcal{I}_r}(\sigma(x_i)+\lambda_i )^2\exp\left\{ 
\sum_{G_j \in \mathcal{G}} w_{G_j}\sigma(x_j) +\sum_j(\sigma(x_j)+\lambda_j) \tilde{\epsilon}_j+
\sum_{H \in \mathcal{H}} w_{H} f_{H}(\sigma(\bar{\bf v}_H))
\right\}}{\sum_{\sigma \in \mathcal{I}_r}\exp\left\{ 
\sum_{G_j \in \mathcal{G}} w_{G_j}\sigma(x_j) +\sum_j(\sigma(x_j)+\lambda_j) \tilde{\epsilon}_j+
\sum_{H \in \mathcal{H}} w_{H} f_{H}(\sigma(\bar{\bf v}_H))
\right\}}\nonumber\\
&-\\
&\left[\frac{\sum_{\sigma \in \mathcal{I}_r}(\sigma(x_i)+\lambda_i )\exp\left\{ 
\sum_{G_j \in \mathcal{G}} w_{G_j}\sigma(x_j) +\sum_j(\sigma(x_j)+\lambda_j) \tilde{\epsilon}_j+
\sum_{H \in \mathcal{H}} w_{H} f_{H}(\sigma(\bar{\bf v}_H))
\right\}}{\sum_{\sigma \in \mathcal{I}_r}\exp\left\{ 
\sum_{G_j \in \mathcal{G}} w_{G_j}\sigma(x_j) +\sum_j(\sigma(x_j)+\lambda_j) \tilde{\epsilon}_j+
\sum_{H \in \mathcal{H}} w_{H} f_{H}(\sigma(\bar{\bf v}_H))
\right\}}\right]^2.
\end{align*}
\end{tiny}
The above is simply the variance of $\sigma(x_i)+\lambda_i$, namely $\mathbb{E}\left[(\sigma(x_i)+\lambda_i)^2\right]-\mathbb{E}\left[\sigma(x_i)+\lambda_i\right]^2\ge 0$. The convexity of $\widetilde{Z_r}$ in $\tilde{\epsilon}_i$ follows.
\end{proof}

\section{Image Classification on Road Sign Dataset}
\label{adx:road-sign-section}

All the experiments shown in Appendix~\ref{adx:road-sign-section} -  \ref{sec:apx-partial-knowleddge} are run on 4 RTX 2080 Ti GPUs.

\noindent\textbf{Task and Dataset.} For road sign classification task, the whole dataset can be viewed as a subset of GTSRB dataset~\cite{stallkamp2012man}, which contains 12 types of German road signs \{"Stop'', "Priority Road'', "Yield'', "Construction Area'', "Keep Right'', "Turn Left'', "Do not Enter'', "No Vihicles'', "Speed Limit 20'', "Speed Limit 50'', "Speed Limit 120'', "End of Previous Limitation''\}, with 14880 training samples, 972 validation samples and 3888 testing samples in total. Besides the road sign classes, we construct $13$ attribute classes as follows:
\begin{itemize}
    \item Border shape classes: "Octagon'', "Square'', "Triangle'', "Circle''.
    \item Border color classes: "Red'', "Blue'', "Black''.
    \item Digit classes: "Digit 20'', "Digit 50'', "Digit 120''.
    \item Content classes: "Left'', "Right'', "Blank''.
\end{itemize}

Based on the indication direction from road sign classes to attribute classes, and the exclusive relationship between attribute classes with the same type, we develop the following two types of knowledge rules as follows:

\begin{itemize}
    \item \underline{Indication rules} $(u, v)$: Road sign class $u$ indicates attribute $v$.
    \item \underline{Exclusion rules} $(u, v)$: Attribute classes $u$ and $v$ with the same type ("Shape'', "Color'', "Digit'' or "Content'') are naturally exclusive. (e.g., One road sign can not have "Octagon'' shape and "Triangle'' shape at the same time.)
\end{itemize}



\noindent\textbf{Knowledge.} We construct our first-order logical rules based on our predefined indication and exclusion knowledge as follows:

\begin{itemize}
    \item \underline{Indication edge} $u \implies v$: if one object belongs to road sign class $u$, it should have attribute $u$:
    \begin{align}
         x_u \wedge \neg x_v = \texttt{False}
    \end{align}
    \item \underline{Exclusion edge} $u \oplus v$: On object can not have attribute $u$ and $v$ at the same time:
        \begin{align}
        x_u \land x_v = \texttt{False}
    \end{align}
\end{itemize}

\textbf{Intuitive Example.} Following the HEX graph-based knowledge structure and rules, we will show several adversary scenarios which could be mitigated through the inference reasoning phase. For instance, if the ``Construction Area" object is attacked to be ``Stop Sign" while other sensing nodes remain unaffected, like the border shape is still detected as the ``Triangle'' shape. Then the indication knowledge rule (The ``Stop Sign" object should have the ``Octagon'' border shape) and the exclusive knowledge rule (No class can have the ``Triangle'' border shape and ``Octagon'' shape at the same time) would be violated. Such violation of the knowledge rules would discourage our pipeline to predict ``Stop Sign" as what the attacker wants. However, the \sr may not distinguish the ``Yield", and ``Construction Area" classes if the attacker fooled the ``Construction Area" sensing completely, which shows the limitation of such structural knowledge, and more knowledge would be required in this case to help improve the robustness.

\newpage
\begin{figure*}[!t]
\centering
\includegraphics[width=0.8\textwidth]{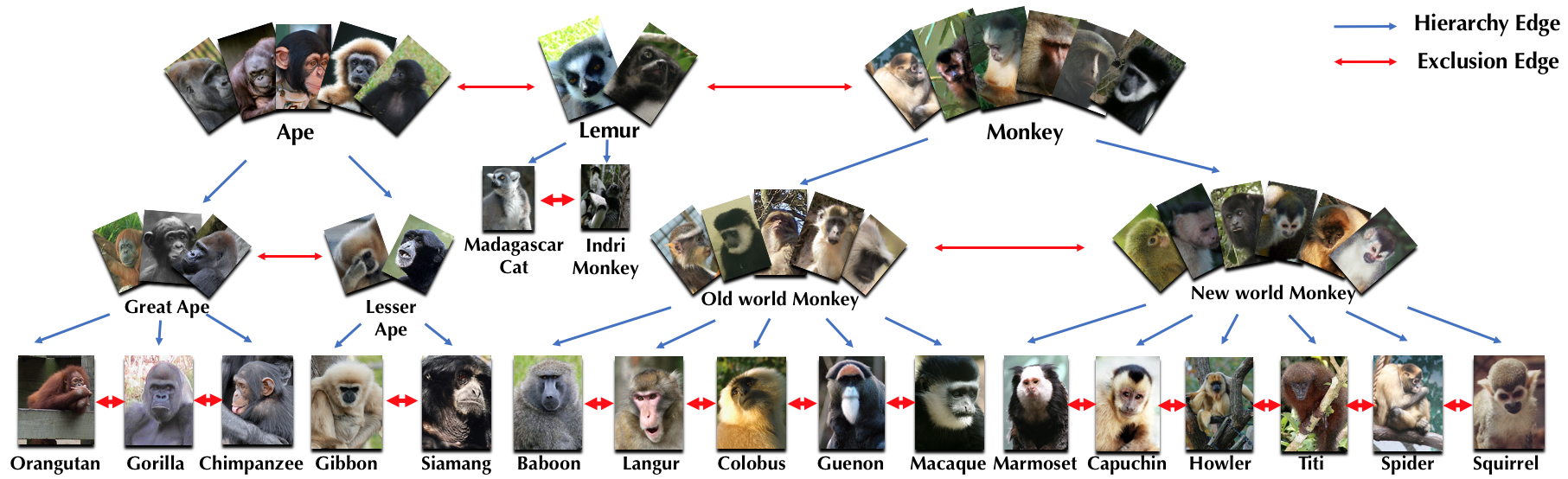}
\caption{\small {\bf PrimateNet}. The knowledge structure of PrimateNet dataset. The \textcolor{blue}{\textbf{Blue}} arrows represent the Hierarchical rules between different classes, and the \textcolor{red}{\textbf{Red}} arrows  represent the Exclusive rules. Some exclusive rules are omitted due to the space limit.}
\label{fig:hex}
\vspace{-1em}
\end{figure*}

\section{Information Extraction on Stock News}
\label{apx:stock_news}
To further evaluate the certified robustness of the reasoning component, in this section we focus on the setting where perturbations can be directly added to the output of sensors (i.e., input of the reasoning component), using information extraction task as an example.

\textbf{Tasks and Dataset.} We consider information extraction (or relation extraction) tasks in NLP based on the stock news dataset --- HighTech dataset, which consists of both daily closing asset price and financial news from \textsl{2006} to \textsl{2013}~\cite{ding2014using}. We choose 9 companies with the highest volume of news coverage, resulting in 4810 articles related to 9 specific stocks that are filtered by company name. We split the stock news dataset into training and testing days chronologically. 
Here we define three information extraction tasks as sensing models: \texttt{StockPrice(Day, Company, Price)}, \texttt{StockPriceChange(Day, Company, Percent)}, \texttt{StockPriceGain(Day, Company)}.
The domain knowledge that we incorporate illustrates the connections between these sensing models.
We provide detailed descriptions of these tasks as below.

\begin{itemize}[wide = 0pt]
\item \textbf{StockPrice(day, company, price)} In this task, we extract the daily closing price of a company's stock from the news articles. To achieve this, we first extract numbers in each sentence of the article as candidate relations. We then label each relation based on the given daily closing asset price as follows: we label the relation whose number starts with "\$" and has the smallest difference with the given closing price as positive, and label all others as negative. With these labeled samples, We train a BERT-based binary classifier \cite{bert} as our sensing model to identify whether the given number is the closing price of the stock and output the prediction confidence. The input form of our BERT sensor is ``[CLS] sentence [SEP] company [SEP] price''.

\item \textbf{StockPriceChange(day, company, percentage)} In this task, we extract the percentage change in a company's stock closing price from a collection of news articles. To achieve this, we first extract numbers in each sentence of the article as candidate relations, and label each relation using the percentage change of the closing asset price from the previous day and the current day. With these labeled samples, we train a BERT-based binary classifier as our sensing model to identify whether the given percent number represents the change rate of the stock on a given day, and output the prediction confidence. The input form of our BERT sensor is ``[CLS] sentence [SEP] company [SEP] percentage''.

\item \textbf{StockPriceGain(day, company, gain)} In this task, we extract the information about whether the closing price of a company's stock rose or fell on a particular day given the news articles. We identify each sentence containing a stock name and numbers starting with "\$" as a candidate relation, and assess each relation by counting the number of positive and negative words in the sentence to determine whether it indicates a rise or fall in the stock price. Specifically, we label one relation as positive 
when \textit{Count(positive word) \textgreater\; Count(negative words)};
and negative 
when \textit{Count(positive word) \textless \; Count(negative words)}. 
With these labeled samples, we train a BERT-based binary classifier as our sensing model to output the confidence. The input form of our BERT sensor is ``[CLS] sentence [SEP] company''.

\end{itemize}

\textbf{Implementation Details.} We employ one BERT-based binary classifier for each information extraction task.  To train the BERT classifiers, we adopt the final hidden state of the first token [CLS] from BERT as the representation of the whole input, and apply dropout with probability $p=0.5$ on this final hidden state. We add a fully connected layer on top of BERT for classification. To fine-tune the BERT classifiers for the three information extraction tasks, we adopt the Adam optimizer with a learning rate of $10^{-5}$, weight decay of $10^{-4}$, and train our classifiers for $30$ epochs with the batch size as $32$. 

\textbf{Knowledge.} For each news article that pertains to a specific company, we define $d$ as the current date,  $p_1$ as the stock price on the current date, and $p_0$ as the stock price on the previous day. We also use $y$ to present the binary variable that indicates whether the company's stock price rose or fell ($y=0$ for ``fell'' and $y=1$ for ``rose''), and  $\beta$ the percentage change of stock price change. We define the following knowledge rules that the extracted tuple $(p_0, p_1, y, \beta)$ 
should satisfy:
\begin{itemize}[leftmargin=*]
    \item \textbf{Rule 1}: The extracted stock price $p_0$ and $p_1$ (from the \texttt{StockPrice} sensor) should be consistent with the stock price change indicator $y$ (from the \texttt{StockPriceGain} sensor): 
    \begin{align}
        y = \mathbbm{I}[p_1 - p_0 > 0]  
    \end{align}
    \item \textbf{Rule 2}: The extracted stock price $p_0$ and $p_1$ (from the \texttt{StockPrice} sensor) should approximately follow the percentage change of stock price $\beta$ (from the \texttt{StockPriceChange} sensor):
    \begin{align}
    p_1  \approx p_0 \times [1 + (-1)^{\mathbbm{I}[p_1 - p_0 > 0]} \times \beta]
\end{align}
\end{itemize}

\textbf{Threat Model.}  Given the perturbation bound $C_S$, we attack our sensing-reasoning pipeline by adding perturbation $\delta$ ($\delta \leq C_S$) on perturbed sensing model’s confidence value $p$ directly. The perturbed confidence value, named as $p'$, should satisfy $p' \in [p - C_S, p + C_S]$. In the threat model we consider, attackers can add perturbations to all sensing models' output prediction confidence.

\vspace{1mm}



\vspace{1mm}
\textbf{Intuitive Example.} 
Here we show an intuitive example of how our knowledge can help improve the prediction robustness of these information extraction models under adversarial attacks. Assume  our sensing models extract the correct stock price information $(p^*_0, p^*_1, y^*, \beta^*)$, where price $p^*_0 > p^*_1$ and the stock price change is ``fell" ($y=0$) by $\beta\%$. Now if the first stock price extraction sensor is attacked to output an incorrect prediction $p'_0$ such that $p'_0 < p^*_1$ while other sensors remain intact; $p'_0$ 
will violate our knowledge rules 1 and 2. Specifically, the stock price change $p'_0 - p^*_1 < 0$ is inconsistent with stock price change prediction $y = 0$, i.e., $p'_0 - p^*_1 > 0$. As a result, our reasoning component will reduce the confidence of the wrong prediction $p'_0$ and increase the confidence of the ground truth $p^*_0$ to be consistent with the knowledge rules; and therefore potentially recover the correct prediction of $p^*_0$. 

\section{Image Classification on PrimateNet Dataset}
\label{adx:primate-section}
\textbf{Task and Dataset.} We aim to evaluate the certified robustness of our \sr on large-scale dataset such as ImageNet ILSVRC2012~\cite{imagenet_cvpr09}. In particular, to obtain domain knowledge for the images, we select 18 Primate animal categories to form a PrimateNet dataset, containing \{Orangutan, Gorilla, Chimpanzee, Gibbon, Siamang, Madagascar cat, Woolly indris, Guenon, Baboon, Macaque, Langur, Colobus, Marmosets, Capuchin monkey, Howler monkey, Titi monkey, Spider monkey, Squirrel monkey\}. Moreover, we create 7 internal classes \{Greater ape, Lesser ape, Ape, Lemur, Old-world monkey, New-world monkey, Monkey\} to construct the hierarchical structure according to the WordNet~\cite{fellbaum2012wordnet}. With such a hierarchical structure, we can build the Primate-class Hierarchy and Exclusion(HEX) graph based on the concepts from \cite{deng2014large} as shown in Fig~\ref{fig:hex}. Within the HEX graph, we develop two types of knowledge rules described as follows:
\begin{itemize}
    \item \underline{Hierarchy rules} $(u, v)$: class $u$ subsumes class $v$ (e.g. Great Ape subsumes Gorilla);
    \item \underline{Exclusion edge} $(u, v)$: class $u$ and class $v$ are naturally exclusive (e.g. Gorilla cannot belong to Great Ape and Lesser Ape at the same time).
\end{itemize}

We consider each class in the HEX graph as the prediction of one sensing model in the \sr, and 
we construct 25 sensing models as the leaf and internal nodes in the HEX graph. Here we use the MLN as our reasoning component connecting to these sensing models.

\textbf{Implementation details.} For each leaf sensing model, we utilize 1300 images from the ILSVRC2012 training set and 50 images from the ILSVRC2012 dev set. We split the 1300 images into 1000 images for training and 300 for testing. For each internal node, we uniformly sample the training images from all its children nodes' training images to form its training set with the same size   1300, since there are no specific instances belonging to internal nodes' categories in PrimateNet. 

During training, we utilize the sensing DNN model for each node in the knowledge hierarchy to output the probability value given the input images. The models consist of a pre-trained ResNet18 feature extractor concatenated by two Fully-Connected layers with ReLU activation. In order to provide the certified robustness of the end-to-end \sr, we adapt the randomized smoothing strategy mentioned in~\cite{cohen2019certified} to certify the robustness of sensing models, and then compose it with the certified robustness of the reasoning component. Specifically, we smoothed our sensing models by adding the isotropic Gaussian noise $\epsilon \sim \mathcal{N}(0, \sigma^2I)$ to the training images during training. We train each sensing model for 80 epochs with the Adam optimizer (initial learning rate is set to $2 \times 10^{-4}$) and evaluate the sensing models' performance on the validation set containing 50 images after every training epoch to avoid over-fitting. During testing, we certify the robustness of trained sensing models with the same smoothing parameter $\sigma$ used during model training.

\textbf{Knowledge.} The knowledge used in this task includes the \textit{hierarchical} and \textit{exclusive} relationships between different categories of the sensing predictions. For instance, the category ``Ape" would include all the instances classified as ``Greater ape, Lesser ape" (hierarchical); and there should not be any intersection for instances predicted as ``Monkey" or ``Lemur" (exclusive). Thus, we build our knowledge rules based on the structural relationships such as hierarchy and exclusion knowledge:

\begin{itemize}[leftmargin=*]
    \item \textbf{Hierarchy edge} $u\implies v$: If one object belongs to class $u$, it should belong to class $v$ as well:
    \begin{align}
         x_u \wedge \neg x_v = \texttt{False}
    \end{align}
    \item \textbf{Exclusion edge} $u \oplus v$: One object should not belong to class $u$ and class $v$ at the same time:
    \begin{align}
        x_u \land x_v = \texttt{False}
    \end{align}
\end{itemize}

In our experiments, we have $22$ hierarchy edges and $35$ exclusive edges as our knowledge rules, build upon $25$ sensing models in total.



\textbf{Threat Model.} 
In this paper, we consider a strong attacker who has access to perturbing several sensing models' input instances during inference time. To perform the attack, the attacker will add perturbation $\delta$, bounded by $C_I$ under the $\ell_2$ norm, onto the test instance against the victim sensing models: $||\delta||_2 < C_I$.
In particular, we consider the attacker to attack $\alpha$ percent of the total sensing models.

Since we apply randomized smoothing  to sensing models during  training, for each sensing model, we can certify
the output probability $p'$ as a function of 
the original confidence $p$, the bound of the perturbation $C_I$, and smoothing parameter $\sigma$ according to Corollary 2 as below:
$$
    p' \in [\Phi(\Phi^{-1}(p)-C_I/\sigma), \Phi(\Phi^{-1}(p)+C_I/\sigma)].
$$

\textbf{Evaluation Metrics.} 
To evaluate the certified robustness of \sr, we focus on the standard \textit{certified accuracy} on a given test set, and the \textit{certified ratio} measuring the percentage of instances that could be certified within a certain perturbation magnitude/radius.

Based on the previous analysis, given the $\ell_2$ based perturbation bound $C_I$, we can certify the output probability of the \sr as $[\mathcal{L}, \mathcal{U}]$. In order to evaluate the certified robustness of \sr, we define the \textbf{Certified Robustness}, measuring the percentage of instances that could be certified to make correct prediction within a perturbation radius, to evaluate the certified robustness following existing work~\cite{cohen2019certified}, which is formally defined as:
$$ \frac{\sum_{i=1}^{N}\mathbbm{I}(([\mathcal{U}_i < 0.5]\land [y_i = 0])\lor
([\mathcal{L}_i > 0.5]\land [y_i = 1]))
}{N},$$
where $N$ refers to the number instances and $y_i$ the ground truth label of the given instance $i$. $\mathbb{I}(\cdot)$ is an indicator function which outputs 1 when its argument takes value true and 0 otherwise.

Moreover, we report the \textbf{Certified Ratio} to measure the percentage of instances that could be certified as a consistent 
prediction
within a perturbation radius (even the consistent prediction might be wrong). The Certified Ratio is defined as:

$$ \frac{\sum_{i=1}^{N}\mathbbm{I}([\mathcal{U}_i < 0.5]\lor
[\mathcal{L}_i > 0.5])
}{N}.$$

Here the lower and upper bounds of the output probability $\mathcal{L}_i$ and $\mathcal{U}_i$ indicate the binary prediction of each sensing model. We assume when the output probability is less than 0.5, it outputs 0.


\vspace{1mm}
\textbf{Intuitive Example.} Following the HEX graph-based knowledge structure and rules, we will show several adversary scenarios which could be mitigated through the inference reasoning phase. For instance, based on Figure~\ref{fig:hex}, if one ``Gorilla" object is attacked to be ``Siamang" while other sensing nodes remain unaffected, the hierarchical knowledge rule (An object belongs to ``Great Ape" class cannot belong to ``Siamang" class) and the exclusive knowledge rule (No object could belong to ``Great Ape" and ``Siamang" classes at the same time) would be violated. Such violation of the knowledge rules would discourage our pipeline to predicting ``Siamang" as what the attacker wants. However, the \sr may not distinguish the ``Orangutan", ``Gorilla", and ``Chimpanzee" classes if the attacker fooled the ``Gorilla" sensing completely, which shows the limitation of such structural knowledge, and more knowledge would be required in this case to help improve the robustness.

\begin{table}[t]
\centering
\vspace{-1em}
\small
\caption{\small  \emph{Benign} accuracy (i.e. $C_I=0,\alpha =0$) of models with and without knowledge under different smoothing parameters $\sigma$ evaluated on PrimateNet.}
\begin{tabular}{c|c|c}
\toprule

$\sigma$ & \textbf{With knowledge}  & \textbf{Without knowledge} \\ \hline
\hline
0.12 & \textbf{0.9670}                & 0.9638             \\ 
0.25 & \textbf{0.9612}               & 0.9554                        \\ 
0.50 & \textbf{0.9435}               & 0.9371                   \\ \bottomrule
\end{tabular}
\label{tab:benign-imagenet}
\vspace{-2em}
\end{table}

\begin{table}
\centering
\caption{\small Certified Robustness and Certified Ratio under different perturbation magnitude $C_I$ and sensing model attack ratio $\alpha$ on PrimateNet.  The sensing models are smoothed with Gaussian noise $\epsilon \sim \mathcal{N}(0, \hat{\sigma}^2 I_d)$ with different smoothing parameter $\sigma$. }
\label{tab:attack-imagenet-smooth-main}

{ \bf (a) $\hat{\sigma} = 0.12$}

\vspace{0.3em}

\scalebox{0.7}{
\begin{tabular}{c|c|cc|cc}
\toprule
\multicolumn{2}{c|}{}            & \multicolumn{2}{c|}{\bf With knowledge} & \multicolumn{2}{c}{\bf Without knowledge} \\ \hline
$C_I$                & $\alpha$ & Cert. Robustness      & Cert. Ratio     & Cert. Robustness       & Cert. Ratio       \\ \hline \hline
\multirow{4}{*}{0.12} & 10\%     & \textbf{0.8849}            & \textbf{0.9419}          & 0.5724             & 0.5724            \\
                      & 20\%     & \textbf{0.8078}            & \textbf{0.8609}          & 0.5717             & 0.5717            \\
                      & 30\%     & \textbf{0.7508}            & \textbf{0.7988}          & 0.5706             & 0.5706            \\
                      & 50\%     & \textbf{0.6236}            & \textbf{0.6647}          & 0.5706             & 0.5706            \\ \hline
\multirow{4}{*}{0.25} & 10\%     & \textbf{0.7888}            & \textbf{0.8428}          & 0.2342             & 0.2342            \\
                      & 20\%     & \textbf{0.6226}            & \textbf{0.6657}          & 0.2320             & 0.2320            \\
                      & 30\%     & \textbf{0.5225}            & \textbf{0.5596}          & 0.2309             & 0.2309            \\
                      & 50\%     & \textbf{0.3594}            & \textbf{0.3824}          & 0.2268             & 0.2268            \\ \bottomrule
\end{tabular}}

\vspace{1em}

{ \bf (b)  $\hat{\sigma} = 0.25$}\\
 \vspace{0.3em}
 \scalebox{0.7}{
\begin{tabular}{c|c|cc|cc}
\toprule
\multicolumn{2}{c|}{}            & \multicolumn{2}{c|}{\textbf{With knowledge}} & \multicolumn{2}{c}{\textbf{Without knowledge}} \\ \hline
$C_I$                & $\alpha$ & Cert. Robustness          & Cert. Ratio          & Cert. Robustness            & Cert. Ratio           \\ \hline \hline
\multirow{4}{*}{0.25} & 10\%     & \textbf{0.8498}       & \textbf{0.9499}      & 0.5314                  & 0.5314                \\
                      & 20\%     & \textbf{0.7608}       & \textbf{0.8952}      & 0.5302                  & 0.5302                \\
                      & 30\%     & \textbf{0.7217}       & \textbf{0.8048}      & 0.5294                  & 0.5294                \\
                      & 50\%     & \textbf{0.6026}       & \textbf{0.6747}      & 0.5235                  & 0.5235                \\ \hline
\multirow{4}{*}{0.50} & 10\%     & \textbf{0.7622}       & \textbf{0.8489}      & 0.2024                  & 0.2024                \\
                      & 20\%     & \textbf{0.5988}       & \textbf{0.6467}      & 0.2024                  & 0.2024                \\
                      & 30\%     & \textbf{0.5324}       & \textbf{0.5541}      & 0.2010                  & 0.2010                \\
                      & 50\%     & \textbf{0.3417}       & \textbf{0.3615}      & 0.2000                  & 0.2000                \\ \bottomrule
\end{tabular}}

 \vspace{1em}

{ \bf (c)  $\hat{\sigma} = 0.50$}\\
 \vspace{0.3em}
 \scalebox{0.7}{
\begin{tabular}{c|c|cc|cc}
\toprule
\multicolumn{2}{c|}{}            & \multicolumn{2}{c|}{\textbf{With knowledge}} & \multicolumn{2}{c}{\textbf{Without knowledge}} \\ \hline
$C_I$                & $\alpha$ & Cert. Robustness          & Cert. Ratio          & Cert. Robustness            & Cert. Ratio           \\ \hline \hline
\multirow{4}{*}{0.50} & 10\%     & \textbf{0.8288}       & \textbf{0.9449}      & 0.4762                  & 0.4762                \\
                      & 20\%     & \textbf{0.7407}       & \textbf{0.8488}      & 0.4749                  & 0.4749                \\
                      & 30\%     & \textbf{0.6907}       & \textbf{0.7968}      & 0.4736                  & 0.4736                \\
                      & 50\%     & \textbf{0.5581}       & \textbf{0.6395}      & 0.4635                  & 0.4635                \\ \hline
\multirow{4}{*}{1.00} & 10\%     & \textbf{0.7307}       & \textbf{0.8448}      & 0.1679                  & 0.1679                \\
                      & 20\%     & \textbf{0.5285}       & \textbf{0.6336}      & 0.1615                  & 0.1615                \\
                      & 30\%     & \textbf{0.4347}       & \textbf{0.5375}      & 0.1612                  & 0.1612                \\
                      & 50\%     & \textbf{0.2624}       & \textbf{0.3318}      & 0.1584                  & 0.1584                \\ \bottomrule
\end{tabular}}

\vspace{-1.5em}

\end{table}



\textbf{Evaluation Results.}
We evaluate the robustness of the \sr  compared with the baseline which is consist of 25 randomized smoothed sensing models for each Primate categories (without knowledge). We evaluate the average certified robustness of both under benign and adversarial scenarios with different smoothing parameter $\hat{\sigma}\in \{0.12, 0.25, 0.50\}$ and $\ell_2$ perturbation bound $C_I = \{\hat{\sigma}, 2\hat{\sigma}\}$. The evaluation results are shown in Table~\ref{tab:attack-imagenet-smooth-main} and  Table~\ref{tab:benign-imagenet}.

First, we evaluate both the \sr and the smoothed ML model with benign test data as shown in Table~\ref{tab:benign-imagenet}. It is interesting that the \sr with knowledge even outperforms the single model without knowledge about $0.7\%$ over different randomized smoothing parameter $\sigma$. It shows that even without attacks, the knowledge could help to improve the classification accuracy slightly, indicating that the domain knowledge integration can help relax the tradeoff between benign accuracy and robustness.
 
Next, we evaluate the certified robustness of \sr and the smoothed ML model considering different smoothing parameters $\hat{\sigma}=\{0.12, 0.25, 0.50\}$ and the input perturbation bound $C_I=\{\hat{\sigma}, 2\hat{\sigma}\}$ in
 Table~\ref{tab:attack-imagenet-smooth-main}.
We can see that when the attack ratio of sensing models $\alpha$ is small, both the {Certified Robustness} and {Certified Ratio} of \sr are significantly higher than that of the baseline smoothed ML model. In the meantime, when the sensing attack ratio $\alpha$ is large (e.g. $50\%$) both the \sr and baseline smoothed ML model obtain low Certified Robustness and Certified Ratio, and their performance gap becomes small.
 
 
This is interesting and intuitive, since if a large percent of sensing models are attacked, such structure-based knowledge, for which the solution to a given regular expression is not unique, would have higher confidence to prefer the other (wrong) side of the prediction. As a result, it is interesting for future work to identify more ``robust" knowledge which is resilient against the large attack ratio of sensing models, in addition to the hierarchical structure knowledge.

We also find that when $C_I/\hat{\sigma}$ is small ($C_I=\hat{\sigma}$), the model with knowledge can perform consistently better than the baseline ML models. When $C_I/\hat{\sigma}$ is large ($C_I=2\hat{\sigma}$), the performance gap becomes even larger. This phenomenon indicates that \sr could demonstrate its strength of robustness compared to the traditional smoothed DNN against an adversary with stronger ability.

To further evaluate the strength of our certified robustness, we calculate the \textit{robustness margin} --- the difference between the lower bound of the true class probability and the upper bound of the top wrong class probability under different perturbation scales --- to inspect the robustness certification (larger difference infer stronger certification). Figure~\ref{fig:primatenet-0.25} shows the histogram of the robustness margin for the model with and without knowledge under smoothing parameter $\hat{\sigma}=0.25$ and different perturbation scale $C_I$. We leave histogram figures under other $\sigma$ settings in Appendix.

\begin{figure*}[!t]
\centering
\includegraphics[width=0.9\textwidth]{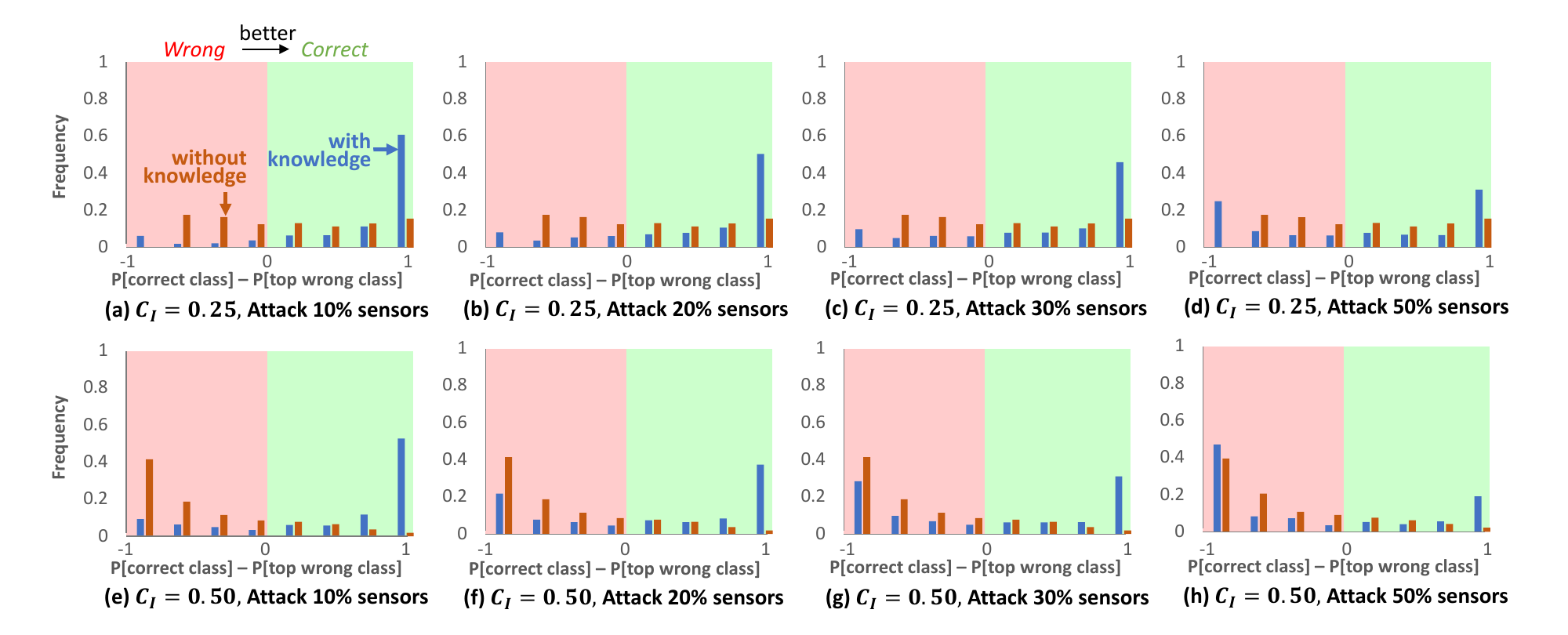}
\vspace{-1.5em}
\caption{\small {\bf (PrimateNet)} Histogram of the \textbf{robustness margin} (difference between the probability of the correct
class (lower bound) and the top wrong class (upper bound)) under perturbation.
If such a difference is positive, it means
that the classifier makes the right prediction
under perturbation. Evaluation is made under smoothing parameter $\sigma=0.25$ with $\ell_2$ perturbation scale $C_I = \{\sigma, 2\sigma\}$. The ratio of the attacked sensors $\alpha$ equals to $10\%, 20\%, 30\%, 50\%$.}
\label{fig:primatenet-0.25}
\vspace{-0.7em}
\end{figure*}

From Figure~\ref{fig:primatenet-0.25}, we can see that under different adversary scenarios, more instances could receive the positive margin (i.e correct prediction) with \sr, which indicates its robustness. Moreover, we find that the \sr could output a large margin value with high frequency under various attacks. That means, it can certify the robustness of the ground truth class with high confidence, which is challenging for current certified robustness approaches for single ML models.

In addition, to evaluate the utility of different knowledge, we also develop \sr by using only one type of 
knowledge (hierarchical or exclusive relationship only) and the results are shown in \Cref{sec:apx-partial-knowleddge}. We observe that using partial knowledge, the robustness of \sr would decrease compared with that using the full knowledge.

\section{Image Classification on Word50 Dataset}
\label{adx:word50-section}

\noindent\textbf{Task and Dataset.} In addition, we also conduct experiments on Word50 dataset~\cite{chen2015learning}, which is created by randomly selecting 50 words and each consisting of five characters. Here we only pick 10 words from it to reduce the computation complexity, and the goal is to classify these 10 words. All the character images are of size $28\times28$ and perturbed by scaling, rotation, and translation. The background of the characters is blurry by inserting different patches, which makes it a quite challenging task. For reference, Some word images sampled from the dataset are shown in~\Cref{fig:word10}. The interesting property of this dataset is that the character combination is given as the prior knowledge, which can be integrated into our \sr. The training, validation, and test sets contain $2,049$, $408$, and $423$ variations of word styles respectively. 

Similar to the classification task on Road Sign dataset, we develop the following two types of knowledge rules as follows:

\begin{itemize}
    \item \underline{Deduction rules} $(u, v_i)$: word $u$ contains character $v_i$ on the $i$th position of the word.
    \item \underline{Exclusion rules} $(u_i, v_i)$: character $u_i$ and character $v_i$ are naturally exclusive on the $i$th position of the word.
\end{itemize}

\noindent\textbf{Implementation details.} Multi-layer perceptrons (MLPs) are used as the main model architecture for the main task that the classification of the 10 words, which is the same to~\cite{chen2015learning}, and the input is the concatenation of the images of 5 characters which consist of a full word. As for the extra knowledge, we train another five MLP models for the classification of the character on each position of the input word, then the corresponding output dimensions for each such character classifier is $26$. While during the inference, we will only pick the top2 of the output from each character classifier, so the final input dimension to the MLN is $10+5\times2=20$ dimensions. Thus, to keep the certification probability the same as the baseline, the $\zeta_0$ here will be set to $1-(1-0.001)^{(1/20)}=5.002 \times 10^{-5}$.

For these sensing models, we adapt the randomized smoothing strategy~\cite{cohen2019certified} to give the certified robustness guarantee of their output confidence under the $\ell_2$-norm bounded perturbation. The $w_H$ is set to $2$ for the deduction rules, and the corresponding $f_H$ is the identity function; while for the exclusion rules, the $w_H$ is set to $-\infty$, and the $f_H$ here is the negation function, namely, $f_H(v)=1-v$.

\noindent\textbf{Knowledge.} We construct our first-order logical rules based on our predefined Deduction and Exclusion knowledge rules:

\begin{itemize}
    \item \underline{Deduction edge} $u \implies v_i$: 
    \begin{align}
         x_u \wedge \neg x_{v_i} = \texttt{False}
    \end{align}
    \item \underline{Exclusion edge} $u_i \oplus v_i$: 
        \begin{align}
        x_{u_i} \land x_{v_i} = \texttt{False}
    \end{align}
\end{itemize}

\noindent\textbf{Threat Model.} 
Same to the setting of the experiments on the Stop Sign dataset, here we consider a stronger attack scenario where the attacker can attack the main task model and all the attribute sensors with \textbf{different} $\ell_2$-norm bounded perturbation $\delta: ||\delta||_2 < C_I$ at the same time. Later on, we can see our \sr could still achieve higher end-to-end certified robustness under even harder cases.

Given the $\ell_2$-norm bound $C_I$, for each sensing model, we can bound its output probability $p'$ under such perturbation, given the original probability $p$ and the certification smoothing parameter $\sigma$ according to Corollary 2 as below:
$$
    p' \in [\Phi(\Phi^{-1}(p)-C_I/\sigma), \Phi(\Phi^{-1}(p)+C_I/\sigma)].
$$

\noindent\textbf{Evaluation Metrics.} We adopt the standard \emph{certified accuracy} as our evaluation metric, defined by the percentage of instances that can be certified under any $\ell_2$-norm bounded perturbation $\delta: ||\delta||_2 < C_I$. Specifically, given the input $x$ with ground-truth label $y$, we can certify the bound of confidence on predicting label $y$ as $[\mathcal{L}, \mathcal{U}]$ for either a vanilla randomize smoothing-based model or our \sr. After that, the certified accuracy can be defined by:
$ \frac{1}{N}\sum_{i=1}^{N}\mathbbm{I}([\mathcal{L}_i > 0.5])
$ where $\mathbbm{I}(\cdot)$ denotes the indicator function.

\noindent\textbf{Intuitive Example.} To make the example more clear, here we use~\emph{pos}(\emph{'a'}, i) to represent that the character \emph{'a'} is in the i\emph{th} position of the word. Then during the inference, given an input word image, we assume the top2 characters returned from the character classifiers for each position is \emph{'s,m'}, \emph{'n,b'}, \emph{'a,o'}, \emph{'q,a'}, \emph{'k,c'}, which are shown in the order of the position. Now, for word~\emph{'snack'}, the corresponding first-order logical form of its deduction rules would be $\emph{'snack'} \implies$ \emph{pos}(\emph{'s'}, $1$),  $\emph{'snack'} \implies$ \emph{pos}(\emph{'n'}, $2$),  $\emph{'snack'} \implies$ \emph{pos}(\emph{'a'}, $3$) and  $\emph{'snack'} \implies$ \emph{pos}(\emph{'k'}, $5$); while for other words like~\emph{'macaw'}, the corresponding rules would be $\emph{'macaw'} \implies$ \emph{pos}(\emph{'m'}, $1$) and $\emph{'macaw'} \implies$ \emph{pos}(\emph{'a'}, $4$). Notice, if the character of the specific word is not shown in the top2 returned characters of its corresponding position, then there will be no deduction rule built for this word and this character. At the meantime, when we consider the possible worlds that satisfy $ \sigma(x_{\emph{snack}}) \land \sigma(v_{\emph{pos}(\emph{'q'}, 4}) = 1$, we will still consider it as a violation of the exclusive rules. In other words, even if the character~\emph{'c'} is not shown in the top2 characters returned from the knowledge classifier in fourth position and thus we do not build the deduction rule $\emph{'snack'} \implies$ \emph{pos}(\emph{'c'}, $4$) explicitly at this time as said above, this rule is still assumed to be true underlyingly.

\noindent\textbf{Evaluation Results.} We evaluate the robustness of the \sr and compare it to the baseline as a vanilla randomized smoothed main task model (without knowledge). We train our models under different smoothing parameters $\hat{\sigma} = \{0.12, 0.25, 0.50\}$ and evaluate our \sr under various $\ell_2$ perturbation magnitude $C_I = \{0.12, 0.25, 0.50, 1.00\}$. Results are show in Table~\ref{tab:word10-exp}, and as we can see, with extra knowledge, the performance is improved tremendously which strongly demonstrates the potential of the \sr.


\begin{figure*}[!t]
	\begin{minipage}{0.5\textwidth}
\begin{table}[H]
\centering
\caption{Certified accuracy under different perturbation magnitude $C_I$ on Word10 dataset. The sensing models are smoothed with Gaussian noise $\epsilon \sim \mathcal{N}(0, \hat{\sigma}^2I_d)$ with different smoothing parameter $\hat{\sigma}$. Rows with $\ast$ denote the best certified accuracy among all the $\hat{\sigma} \in \{0.12, 0.25, 0.50\}$. (All certificates holds with 99.9\% confidence)}
\vspace{2mm}
\scalebox{0.55}{
\begin{tabular}{c|c|c|c|c|c}
\toprule
Methods                         & $\sigma$ & $C_I = 0.12$        & $C_I = 0.25$        & $C_I = 0.50$        & $C_I = 1.00$        \\ \hline
         & 0.12     & 58.2                & 49.2                & 0.0                 & 0.0                 \\
Vanilla Smoothing  & 0.25     & 51.8                & 42.3                & 25.3                & 0.0                 \\
(w/o knowledge)  & 0.50     & 42.6                & 33.1                & 19.1                & 2.6                \\ 
                                            & $\ast$     & 58.2                & 49.2                & 25.3                & 2.6                \\ \hline
 & 0.12     & \textbf{88.7} & \textbf{77.8}       & \textbf{30.7}       & \textbf{0.0}       \\
Sensing-Reasoning Pipeline & 0.25     & {\ul\textbf{95.0}}       & {\ul \textbf{90.8}} & \textbf{52.5}       & \textbf{2.8}       \\
(w/ knowledge)  & 0.50     & \textbf{91.5}       & \textbf{86.8}       & {\ul \textbf{69.3}} & {\ul \textbf{6.4}} \\ 
                                            & $\ast$     & \textbf{95.0}       & \textbf{90.8}       & \textbf{69.3}       & \textbf{6.4}       \\ \bottomrule
\end{tabular}}
\label{tab:word10-exp}
\end{table}
\end{minipage}
\hspace{5mm}
\begin{minipage}{0.45\textwidth}
\begin{figure}[H]
			\begin{center}
		\includegraphics[width=0.8\textwidth]{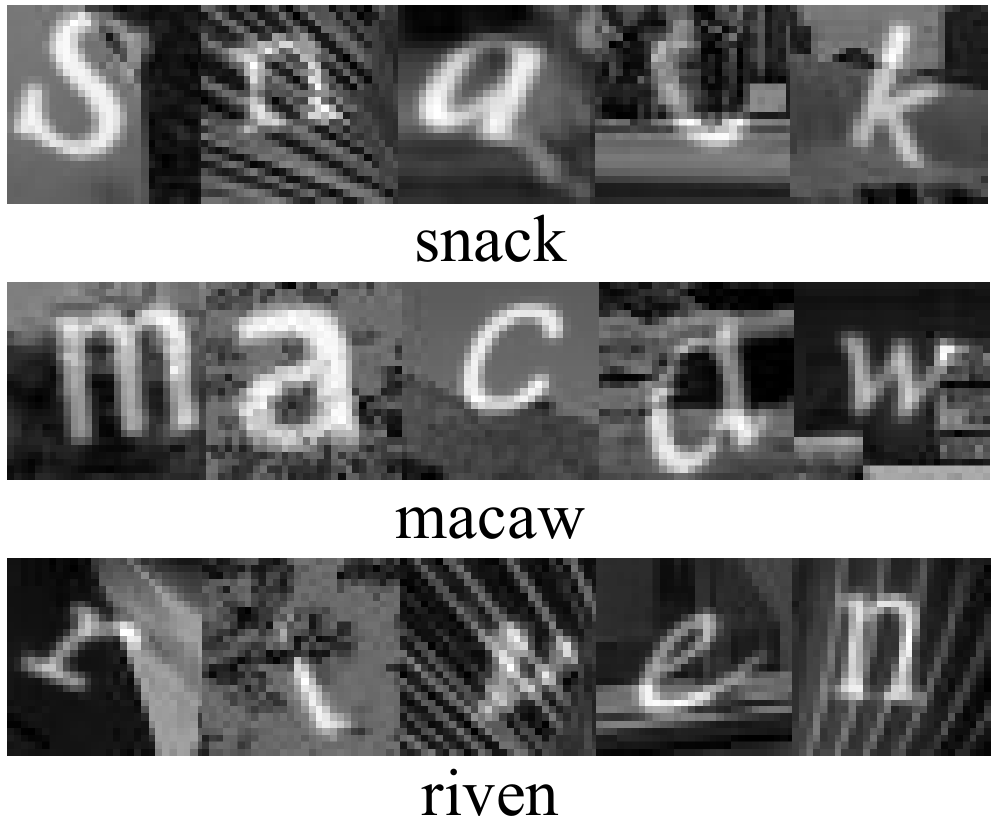}
			\end{center}
			\vspace{-4mm}
			\caption{Several word images sampled from Word50 dataset.}
			\label{fig:word10}
		\end{figure}
	\end{minipage}
\end{figure*}

\section{Image Classification with Constructed Knowledge Rules}
\label{adx:constructed-knowledge}

For natural image datasets with no apparent knowledge rules, we can still apply our \sr based on some generated simple knowledge rules such as redundancy rules. For instance, we test on MNIST and CIFAR10 dataset by constructing basic rules as follows: for \underline{MNIST}, we construct five pseudo attributes and randomly assign them to four different digits, so that each digit will exactly contain two pseudo attributes; for \underline{CIFAR10}, we randomly generate ten pseudo attributes, and each pseudo attribute will be randomly assigned to 3 to 7 different categories. We build the indication rules between each pseudo attribute and its corresponding digits, and the exclusion rules between different digit classes.

During the training, we adopt the SOTA Consistency training~\cite{jeong2020consistency} as our sensing model training method, and build our sensing-reasoning pipeline on top of these pretrained sensing models.

From the results shown in~\Cref{tab:mnist} and~\Cref{tab:cifar}, we can see that the sensing-reasoning pipeline beats the SOTA baselines in terms of the certified robustness even with the simple and generated knowledge rules. Generally, we should expect higher certified robustness by integrating with natural and meaningful knowledge rules (e.g., road sign classification and information extraction tasks as shown in our paper).

\begin{table}[!htbp]
\centering
\caption{ (\textbf{MNIST}) \textit{Certified accuracy} under different input perturbation magnitudes ($C_I$).  
}
\scalebox{0.58}{
\begin{tabular}{c|c|c|c|c|c|c|c|c|c}
\toprule
Methods                                   & $C_I=0.00$ & $C_I=0.25$ & $C_I=0.50$ & $C_I=0.75$ & $C_I=1.00$ & $C_I=1.25$ & $C_I=1.50$ & $C_I=1.75$ & $C_I=2.00$ \\ \hline
Consistency Training         & 99.5       & 98.9       & \bf 98.0       & 96.0       & 93.0 & 87.8 & 78.5 & 60.5 & 41.7       \\ \hline
Sensing-Reasoning Pipeline (Consistency) & \bf 99.6       & \bf 98.2       & 97.6       & \bf 96.3       & \bf 93.5 & \bf 88.2 & \bf 78.9 & \bf 61.2 & \bf 43.2       \\ \bottomrule
\end{tabular}}

\label{tab:mnist}
\end{table}

\begin{table}[!htbp]
\centering
\caption{ (\textbf{CIFAR10}) \textit{Certified accuracy} under different input perturbation magnitudes ($C_I$).  
}
\scalebox{0.58}{
\begin{tabular}{c|c|c|c|c|c|c|c|c|c}
\toprule
Methods                                   & $C_I=0.00$ & $C_I=0.25$ & $C_I=0.50$ & $C_I=0.75$ & $C_I=1.00$ & $C_I=1.25$ & $C_I=1.50$ & $C_I=1.75$ & $C_I=2.00$ \\ \hline
Consistency Training         & 77.8       & 68.8       & \bf 57.4       & 43.8       & 36.2 & 29.5 & 22.9 & 19.7 & 16.6       \\ \hline
Sensing-Reasoning Pipeline (Consistency) & \bf 78.4       & \bf 70.4       & 56.2       & \bf 46.0       & \bf 37.4 & \bf 29.6 & \bf 25.2 & \bf 21.8 & \bf 18.8       \\ \bottomrule
\end{tabular}}

\label{tab:cifar}
\end{table}



\newpage
\section{Ablation Study on Partial Knowledge Enrichment.}
\label{sec:apx-partial-knowleddge}
In PrimateNet experiments, we also investigate how Hierarchy knowledge and Exclusive knowledge would affect the \emph{End-to-end} robustness of our \sr individually. We compare the certified robustness and certified ratio of our \sr enriched by \{No knowledge; Hierarchy knowledge only; Exclusive knowledge only; Hierarchy + Exclusive knowledge\} and the results are shown in Table~\ref{tab:apx-certified-robustness} and~\ref{tab:apx-certified-ratio}.

From the results, we can see while partial knowledge enrichment would lead to fragile robustness under severe scenarios ($\alpha=0.5$), complete knowledge enrichment could achieve much better robustness compared to \sr without knowledge enrichment. This indicates that incomplete (or weak) knowledge, which is easy to break and hard to recover under severe adversarial scenarios, could even harm the robustness of our \sr. How to explore good and robust knowledge to enrich our \sr could be our interesting future direction.

\begin{table}[!h]
\centering
\caption{\small \textbf{Certified Robustness} with different perturbation magnitude $C_I$ and sensing model attack ratio $\alpha$ on PrimateNet.  The sensing models are smoothed with Gaussian noise $\epsilon \sim \mathcal{N}(0, \hat{\sigma}^2 I_d)$ with different smoothing parameter $\sigma$. Here \textbf{``Hierarchy.''} refers to the \sr enriched by hierarchy knowledge only while \textbf{``Exclusive.''} the exclusive knowledge only. \textbf{``Combined.''} shows the \sr enriched by both domain knowledge. }
\label{tab:apx-certified-robustness}

{ \bf (a) $\hat{\sigma} = 0.12$}

\vspace{0.3em}

\scalebox{0.81}{
\begin{tabular}{c|c|cccc}
\toprule
$C_I$                & $\alpha$           & {\bf No knowledge} & {\bf Hierarchy.} & {\bf Exclusive.} & {\bf Combined.} \\ \hline \hline
\multirow{4}{*}{0.12} & 10\%     & 0.5724            & 0.7912          & 0.7020             & \bf 0.8849            \\
                      & 20\%     & 0.5717            & 0.6932          & 0.6236             & \bf 0.8078            \\
                      & 30\%     & 0.5706            & 0.6280          & 0.5624             & \bf 0.7508            \\
                      & 50\%     & 0.5706            & 0.4868          & 0.4320             & \bf 0.6236            \\ \hline
\multirow{4}{*}{0.25} & 10\%     & 0.2342            & 0.6670          & 0.5232             & \bf 0.7888            \\
                      & 20\%     & 0.2320            & 0.4704          & 0.3468             & \bf 0.6226            \\
                      & 30\%     & 0.2309            & 0.3632          & 0.3158             & \bf 0.5225            \\
                      & 50\%     & 0.2268            & 0.2122          & 0.2004             & \bf 0.3594            \\ \bottomrule
\end{tabular}}

\vspace{1em}

{ \bf (b)  $\hat{\sigma} = 0.25$}\\
 \vspace{0.3em}
 \scalebox{0.81}{
\begin{tabular}{c|c|cccc}
\toprule
$C_I$                & $\alpha$           & {\bf No knowledge} & {\bf Hierarchy.} & {\bf Exclusive.} & {\bf Combined.} \\ \hline \hline
\multirow{4}{*}{0.25} & 10\%     & 0.5314       & 0.7766      & 0.6998                  & \bf 0.8498                \\
                      & 20\%     & 0.5302       & 0.6810      & 0.6002                  & \bf 0.7608                \\
                      & 30\%     & 0.5294       & 0.6278      & 0.5464                  & \bf 0.7217                \\
                      & 50\%     & 0.5235       & 0.4924      & 0.4126                  & \bf 0.6026                \\ \hline
\multirow{4}{*}{0.50} & 10\%     & 0.2024       & 0.6754      & 0.5196                  & \bf 0.7622                \\
                      & 20\%     & 0.2024       & 0.4636      & 0.3298                  & \bf 0.5988                \\
                      & 30\%     & 0.2010       & 0.3680      & 0.2870                  & \bf 0.5324                \\
                      & 50\%     & 0.2000       & 0.2204      & 0.1652                  & \bf 0.3417                \\ \bottomrule
\end{tabular}}

 \vspace{1em}

{ \bf (c)  $\hat{\sigma} = 0.50$}\\
 \vspace{0.3em}
 \scalebox{0.81}{
\begin{tabular}{c|c|cccc}
\toprule
$C_I$                & $\alpha$           & {\bf No knowledge} & {\bf Hierarchy.} & {\bf Exclusive.} & {\bf Combined.} \\ \hline \hline
\multirow{4}{*}{0.50} & 10\%     & 0.4762       & 0.7412      & 0.6952                  & \bf 0.8288                \\
                      & 20\%     & 0.4749       & 0.6120      & 0.5884                  & \bf 0.7407                \\
                      & 30\%     & 0.4736       & 0.5410      & 0.5002                  & \bf 0.6907                \\
                      & 50\%     & 0.4635       & 0.4040     & 0.3862                  & \bf 0.5581                \\ \hline
\multirow{4}{*}{1.00} & 10\%     & 0.1679       & 0.6000      & 0.4838                  & \bf 0.7307                \\
                      & 20\%     & 0.1615       & 0.3834      & 0.3184                  & \bf 0.5285                \\
                      & 30\%     & 0.1612       & 0.2920      & 0.2362                  & \bf 0.4347                \\
                      & 50\%     & 0.1584       & 0.1498      & 0.1404                  & \bf 0.2624                \\ \bottomrule
\end{tabular}}

\vspace{-1.5em}

\end{table}

\begin{table}[!bh]
\centering
\caption{\small \textbf{Certified Ratio} with different perturbation magnitude $C_I$ and sensing model attack ratio $\alpha$ on PrimateNet.  The sensing models are smoothed with Gaussian noise $\epsilon \sim \mathcal{N}(0, \hat{\sigma}^2 I_d)$ with different smoothing parameter $\sigma$. Here \textbf{``Hierarchy.''} refers to the \sr enriched by hierarchy knowledge only while \textbf{``Exclusive.''} the  exclusive knowledge only. \textbf{``Combined.''} shows the \sr enriched by both domain knowledge. }
\label{tab:apx-certified-ratio}

{ \bf (a) $\hat{\sigma} = 0.12$}

\vspace{0.3em}

\scalebox{0.81}{
\begin{tabular}{c|c|cccc}
\toprule
$C_I$                & $\alpha$           & {\bf No knowledge} & {\bf Hierarchy.} & {\bf Exclusive.} & {\bf Combined.} \\ \hline \hline
\multirow{4}{*}{0.12} & 10\%     & 0.5724           & 0.8714          & 0.7320             & \bf 0.9419            \\
                      & 20\%     & 0.5717            & 0.7586          & 0.6442             & \bf 0.8609            \\
                      & 30\%     & 0.5706            & 0.6850          & 0.5928             & \bf 0.7988            \\
                      & 50\%     & 0.5706            & 0.5270          & 0.4642             & \bf 0.6647            \\ \hline
\multirow{4}{*}{0.25} & 10\%     & 0.2342            & 0.7330          & 0.5482             & \bf 0.8428            \\
                      & 20\%     & 0.2320            & 0.5150          & 0.3842             & \bf 0.6657            \\
                      & 30\%     & 0.2309            & 0.4011          & 0.3422             & \bf 0.5596            \\
                      & 50\%     & 0.2268            & 0.2322          & 0.2262             & \bf 0.3824            \\ \bottomrule
\end{tabular}}

\vspace{1em}

{ \bf (b)  $\hat{\sigma} = 0.25$}\\
 \vspace{0.3em}
 \scalebox{0.81}{
\begin{tabular}{c|c|cccc}
\toprule
$C_I$                & $\alpha$           & {\bf No knowledge} & {\bf Hierarchy.} & {\bf Exclusive.} & {\bf Combined.} \\ \hline \hline
\multirow{4}{*}{0.25} & 10\%     & 0.5314       & 0.9102      & 0.7254                  & \bf 0.9499                \\
                      & 20\%     & 0.5302       & 0.7910      & 0.6226                  & \bf 0.8952                \\
                      & 30\%     & 0.5294       & 0.7322      & 0.5878                  & \bf 0.8048                \\
                      & 50\%     & 0.5235       & 0.5670      & 0.4302                  & \bf 0.6747                \\ \hline
\multirow{4}{*}{0.50} & 10\%     & 0.2024       & 0.7998      & 0.5322                  & \bf 0.8489                \\
                      & 20\%     & 0.2024       & 0.5512      & 0.3490                  & \bf 0.6467                \\
                      & 30\%     & 0.2010       & 0.4440      & 0.3266                  & \bf 0.5541                \\
                      & 50\%     & 0.2000       & 0.2632      & 0.1734                  & \bf 0.3635                \\ \bottomrule
\end{tabular}}

 \vspace{1em}

{ \bf (c)  $\hat{\sigma} = 0.50$}\\
 \vspace{0.3em}
 \scalebox{0.81}{
\begin{tabular}{c|c|cccc}
\toprule
$C_I$                & $\alpha$           & {\bf No knowledge} & {\bf Hierarchy.} & {\bf Exclusive.} & {\bf Combined.} \\ \hline \hline
\multirow{4}{*}{0.50} & 10\%     & 0.4762       & 0.8924      & 0.7128                  & \bf 0.9449                \\
                      & 20\%     & 0.4749       & 0.7370      & 0.6144                  & \bf 0.8488                \\
                      & 30\%     & 0.4736       & 0.6552      & 0.5462                  & \bf 0.7968                \\
                      & 50\%     & 0.4635       & 0.4938      & 0.4324                  & \bf 0.6395                \\ \hline
\multirow{4}{*}{1.00} & 10\%     & 0.1679       & 0.7374      & 0.5204                  & \bf 0.8448                \\
                      & 20\%     & 0.1615       & 0.4906      & 0.3398                  & \bf 0.6336                \\
                      & 30\%     & 0.1612       & 0.3850      & 0.2926                  & \bf 0.5375                \\
                      & 50\%     & 0.1584       & 0.1996      & 0.1628                  & \bf 0.3318                \\ \bottomrule
\end{tabular}}

\vspace{-1.5em}

\end{table}

\section{Reasoning Component as Bayesian Networks}

A Bayesian network (BN) is a probabilistic graphical model that represents a set of variables and their conditional dependencies with a directed acyclic graph.
Let us first consider a Bayesian Network with tree structures, the probability of a random variable being 1 is given by 
\begin{align*}
\Pr{X=1,\{p_i\}}=\sum_{x_1,...x_n} P(1|x_1,...,x_n) \prod_i p_i ^{x_i}(1-p_i)^{1-x_i}.
\end{align*}
In the following subsections, we will prove a hardness result of checking robustness in general MLN and BNs and use the above definition to construct an efficient procedure to certify robustness for binary tree BNs.
\subsection{Hardness of Certifying Bayesian Networks}
Analogously with the above reasoning, we can also state the general hardness result for deciding the robustness of BNs:

\begin{theorem}[BN hardness]\label{BN hardness}
Given a Bayesian network with a set of parameters $\{p_i\}$, a set of perturbation parameters $\{\epsilon_i \}$ and threshold $\delta$, deciding whether
$$|\Pr{X=1; \{p_i\}}-\Pr{X=1; \{p_i+\epsilon_i \}}|<\delta$$ is at least as hard as estimating $\Pr{X=1; \{p_i\}}$ up to $\varepsilon_c$ multiplicative error, with $\epsilon_i = O(\varepsilon_c)$.
\end{theorem}
\begin{proof}
Let $\alpha=[p_i]$, $Q(\sigma)=X$ and $\pi_\alpha$ defined by the the probability distribution of a target random variable. Since $X\in \{0,1\}$, we have $\E[\sigma\sim\pi_{\alpha}]{Q(\sigma)}=\Pr{X=1; \{p_i\}}$. The proof then follows analogously from Theorem \ref{theorem:counting-to-robustness}.
\end{proof}

Based on the hardness analysis of the reasoning robustness, 
we can see that it is challenging to directly certify the robustness of the reasoning component.
However, just as we can approximately certify the robustness of single ML models~\cite{li2020sok}, in the next section, we will present and discuss how to approximately certify the robustness of the reasoning component, and we show that for some structures such as BN trees, the certification could even be tight.

\subsection{Certifying Bayesian Networks}
Apart from MLNs, we also aim to reason about the robustness
for Bayesian networks with binary tree structures, and derive an efficient algorithm to provide the \textit{tight} upper and lower bounds of reasoning robustness. Concretely, we introduce the set of perturbation $\{\epsilon_i\}$ on $\{p_i\}$ and consider the maximum resultant probability: 
\begin{small}
\begin{align*}
&\max_{\epsilon_1...\epsilon_n} \sum_{x_1,...x_n} P(1|x_1,...,x_n) \prod_i (p_i + \epsilon_i)^{x_i}(1-p_i - \epsilon_i)^{1-x_i}\nonumber\\
=&\max_{\epsilon_1...\epsilon_n}\sum_{x_1,...x_{n-1}} \Bigg(\prod_{i<n} (p_i + \epsilon_i)^{x_i}(1-p_i - \epsilon_i)^{1-x_i} \Bigg) \times \nonumber\\
&~~~~~~~~~~~~~~~~~~~~\Bigg( 
     P(1|x_1,...,x_{n-1}, 0) (1-p_n - \epsilon_n) \nonumber+ P(1|x_1,...,x_{n-1}, 1) (p_n + \epsilon_n)\Bigg) \nonumber\\
=&\max_{\epsilon_1...\epsilon_n}\sum_{x_1,...x_{n-1}} \left(\prod_{i<n} (p_i + \epsilon_i)^{x_i}(1-p_i - \epsilon_i)^{1-x_i} \right) \times \nonumber\\
&~~~~~~~~~~~~~~~~~~~~\Bigg(P(1|x_1,...,x_{n-1}, 0)+      
\Big(P(1|x_1,...,x_{n-1}, 1)\nonumber-P(1|x_1,...,x_{n-1}, 0)\Big) (p_n + \epsilon_n)
\Bigg).
\end{align*}
\end{small}

In the above we have isolated the last variable in the expression. Without additional structure, the above optimisation over perturbation is hard as stated in Theorem \ref{BN hardness}. However, if additionally we require the Bayesian network to be binary trees, we show that the optimisation over perturbation and the checking of robustness of the model is trackable. We summarise the procedure for checking robustness of binary tree structured BNs in the following theorem with the proof.
\begin{lemma}[Binary BN Robustness]\label{Binary BN Robustness}
Given a Bayesian network with binary tree structure, and the set of parameters $\{p_i\}$, the probability of a variable $X=1$,
\begin{align*}
\Pr{X=1,\{p_i\}}=\sum_{x_1, x_2} P(1|x_1, x_2) \prod_i p_i^{x_i}(1-p_i)^{1-x_i}	
\end{align*}
is $\delta_b$-robust, where 
\begin{align*}
\delta_b &=\max\bigg\{\Big|\Pr{X=1,\{p_i\}}- F_{max}\Big|,\Big|\Pr{X=1,\{p_i\}}-F_{min}\Big| \bigg\},\nonumber\\
with~~~~~~
&F_{max} = \max_{y_1,y_2}A_0 + A_1(y_1 + y_2) + (A_2 - A_1)y_1y_2, \nonumber\\
&F_{min} = \min_{y_1,y_2}A_0 + A_1(y_1 + y_2) + (A_2 - A_1)y_1y_2, ~~~~ \nonumber\\
&s.t., y_i \in [p_i - C_i, p_i + C_i],
\end{align*}
Where $A_0=P(1| 0, 0)$, $A_1= P(1 | 0, 1)  - P(1 | 0, 0)$ and $A_2= P(1 | 1, 1)  - P(1 | 0, 1)$ are all pre-computable constants given the parameters of the Bayesian network.
\end{lemma}

\subsubsection*{Proof of Lemma \ref{Binary BN Robustness}}

\newtheorem*{lemmaBNR}{Lemma \ref{Binary BN Robustness}}
\begin{proof}
We explicitly write out the probability subject to perturbation,
\begin{small}
\begin{align*}
&\Pr{X=1,\{p_i+\epsilon_i\}} \\
=&\sum_{x_1, x_2} P(1|x_1, x_2) \prod_i (p_i+\epsilon_i)^{x_i}(1-p_i-\epsilon_i)^{1-x_i}\nonumber\\	
=&(p_1 + \epsilon_1) \sum_{x_2} P(1 | 1, x_2) (p_2 + \epsilon_2)^{x_2} (1 - p_2 - \epsilon_2)^{1- x_2} \nonumber\\
~&+ (1 - p_1- \epsilon_1) \sum_{x_2} P(1 | 0, x_2) (p_2 + \epsilon_2)^{x_2} (1 - p_2 - \epsilon_2)^{1- x_2} \nonumber\\ 
=&\sum_{x_2} P(1 | 0, x_2) (p_2 + \epsilon_2)^{x_2} (1 - p_2 - \epsilon_2)^{1- x_2} \nonumber\\
~&+ (p_1+ \epsilon_1) \left( \sum_{x_2} \Big( P(1 | 1, x_2) - P(1 | 0, x_2) \Big)(p_2 + \epsilon_2)^{x_2} (1 - p_2 - \epsilon_2)^{1- x_2} \right) \nonumber\\
=& \sum_{x_2} P(1 | 0, x_2) (p_2 + \epsilon_2)^{x_2} (1 - p_2 - \epsilon_2)^{1- x_2} \nonumber\\
~&+ (p_1+ \epsilon_1) \bigg( 
\Big( P(1 | 1, 1)  - P(1 | 0, 1) \Big) (p_2 + \epsilon_2)  + \Big( P(1 | 1, 0)  - P(1 | 0, 0) \Big) (1 - p_2 - \epsilon_2)\bigg) \nonumber\\
=& P(1| 0, 0) + \Big(P(1 | 0, 1) - P(1| 0, 0)\Big) (p_2 + \epsilon_2) + (p_1+ \epsilon_1)\times \nonumber\\
~& \bigg( 
P(1 | 1, 0)  - P(1 | 0, 0)  +  \Big(P(1 | 1, 1)  - P(1 | 0, 1)  -  P(1 | 1, 0)  + P(1 | 0, 0) \Big) (p_2 + \epsilon_2)\bigg).
\end{align*}\end{small}
It follows that the robustness problem boils down to finding the maximum and minimum of $F=A_0 + A_1y_2 + A_1y_1 + (A_2 - A_1)y_1y_2$, with $y_i=p_i+\epsilon_i$.
\end{proof}

Specifically, in order to compute $F_{max}$ and $F_{min}$, we take partial derivatives of F:
\begin{align*}
\frac{\partial F}{\partial y_1}=A_1+(A_2-A_1)y_2,\nonumber\\
\frac{\partial F}{\partial y_2}=A_1+(A_2-A_1)y_1.
\end{align*}
Setting $\frac{\partial F}{\partial y_1}=\frac{\partial F}{\partial y_2}=0$ leads to $y_1^*=y_2^*=\frac{A_1}{A_1-A_2}$. In order to check if $y_i^*$ correspond to maximum or minimum. evaluate $\frac{\partial^2 F}{\partial y_i^2}=A_2-A_1$. We have the following scenarios: 
\begin{itemize}
	\item If $y_i^*\in [p_i - C_i, p_i + C_i]$ and $A_2-A_1>0$, then $y_i^*$ correspond to a minimum.
	\item If $y_i^*\in [p_i - C_i, p_i + C_i]$ and $A_2-A_1<0$, then $y_i^*$ correspond to a maximum.
	\item If  $y_i^* \notin [p_i - C_i, p_i + C_i]$, then $y_i$ is monotonic in the range of $[p_i - C_i, p_i + C_i]$ and the maximum or minimum are found at $p_i \pm C_i$.
\end{itemize}
Having shown the robustness of probability of one node in the Bayesian network, the robustness of the whole network can be computed recursively from the bottom to the top.

\fi


\end{document}
